\theoremstyle{definition}
\newtheorem{definition}{Definition}
\newtheorem{lemma}{Lemma}
\newtheorem{theorem}{Theorem}
\begin{document}
%
\title{A Practically Competitive and Provably Consistent Algorithm for Uplift Modeling}

\author{
	\IEEEauthorblockN{Yan Zhao, Xiao Fang}
	\IEEEauthorblockA{Department of Electrical Engineering \\ and  Computer Science\\
		Massachusetts Institute of Technology\\
		Cambridge, Massachusetts 02139\\
		Email: zhaoyanmit@gmail.com, ustcfx@gmail.com}
	\and
	\IEEEauthorblockN{David Simchi-Levi}
	\IEEEauthorblockA{Institute for Data, Systems, and Society \\Department of Civil and Environmental Engineering \\Operations Research Center \\ Massachusetts Institute of Technology\\ Cambridge, MA 02139. Email:  dslevi@mit.edu}
}


%


\maketitle

\begin{abstract}
Randomized experiments have been critical tools of decision making for decades. However, subjects can show significant heterogeneity in response to treatments in many important applications. Therefore it is not enough to simply know which treatment is optimal for the entire population. What we need is a model that correctly customize treatment assignment base on subject characteristics. The problem of constructing such models from randomized experiments data is known as Uplift Modeling in the literature. Many algorithms have been proposed for uplift modeling and some have generated promising results on various data sets. Yet little is known about the theoretical properties of these algorithms. In this paper, we propose a new tree-based ensemble algorithm for uplift modeling. Experiments show that our algorithm can achieve competitive results on both synthetic and industry-provided data. In addition, by properly tuning the "node size" parameter, our algorithm is proved to be consistent under mild regularity conditions. This is the first consistent algorithm for uplift modeling that we are aware of.\\
\end{abstract}

\noindent
\textbf{Copyright Notice:} This paper has been accepted to the 2017 IEEE International Conference on Data Mining. Authors have assigned to The Institute of Electrical and Electronic Engineers (the ``IEEE") all rights under the IEEE copyright. The article published in the proceedings of ICDM 2017 under the same title is a shorten version of this paper.  

%
\IEEEpeerreviewmaketitle

\section{Introduction}\label{sec:intro}

Decision makers often face the situation where they need to identify from a set of alternatives the candidate that leads to the most desirable outcome. For example, an airline company that sells priority boarding as an ancillary product needs to select a good price (usually among a few predetermined numbers) that maximizes the revenue. Oftentimes passengers show significant heterogeneity in their response to prices and the answer as to which price is optimal depends on the circumstance. For example, the revenue maximizing prices are likely to be different for a route between major cities and a route between vacation destinations. Luckily, for some application, we can conduct randomized experiments to learn more about subject responses under different scenarios. In such an experiment, subjects are randomly assigned to treatments following a given probability distribution. Then the characteristics of the subject, the assigned treatment, and the response are recorded. Given the randomized experiment data, we want to construct models that can correctly predict the optimal treatment based on subject characteristics. This problem is known as \textbf{Uplift Modeling} in the literature. 

While both generating a mapping from the feature space to a finite set of labels, uplift modeling should not be confused with classification problems. The fundamental difference comes from the fact that the data for uplift modeling is \emph{unlabeled}. For any individual subject, it is impossible to know which treatment is optimal because we can only observe its response under the (randomly) assigned treatment and none of the alternatives. This poses unique challenges in the construction and evaluation of uplift models.  

One research area that is related to but different from uplift modeling is the study on heterogeneous treatment effect \cite{Athey2016}\cite{Wager2017}. While uplift modeling aims to identify the optimal treatment among possibly many alternatives, analysis of heterogeneous treatment effect focus on estimating the difference in expected response caused by a single treatment. The distinction between the two areas is more apparent when we look at their formulation. Let $\mathbf{X}$ be the feature vector and $T$ the treatment. Denote as $Y$ the response which distribution depends on $\mathbf{X}$ and $T$. For uplift modeling the treatment can take a finite number of values denoted as $1, 2, ..., K$.  The objective is to obtain an accurate estimator of 
$$h(\mathbf{x}) \equiv \arg\max_{t=1,...,K} \mathbb{E}[Y| \mathbf{X}=\mathbf{x}, T=t],$$
i.e., the conditional response-maximizing treatment. The focus of heterogeneous treatment effect is, on the other hand, accurate estimates of and inference for 
$$\tau(\mathbf{x}) \equiv \mathbb{E}[Y|\mathbf{X}=\mathbf{x}, T=1] -  \mathbb{E}[Y|\mathbf{X}=\mathbf{x}, T=0]$$
where $T=1$ indicates the treatment is applied and $T=0$ otherwise. It is clear that heterogeneous treatment effect is applicable only when there is a single treatment because the definition of subtraction is ambiguous between more than two terms. Similar arguments can be made about the difference between uplift modeling and subgroup analysis \cite{Su2010}.

A generic way to solve uplift problems is the Separate Model Approach (SMA). The randomized experiment data is split by treatment, and for each treatment one prediction model is built. Given a new test example, we can obtain its predicted response under each treatment and select the correspondingly best treatment. The main advantage of this approach is that it does not require specialized algorithms. Any existing classification/regression model can be incorporated into this scheme. The disadvantage is that SMA does not always perform well in practice \cite{Lo2002}\cite{Radcliffe2011}. To correctly identify the optimal treatment, a learning algorithm need to know how well each and every treatment is doing. However, information about other treatments is never provided to the learning algorithm under the SMA scheme. For more discussion on the failure of SMA please see Section 5 of \cite{Radcliffe2011}. 

Disappointed by the performance of the Separate Model Approach, researchers have proposed a number of specialized algorithms for uplift modeling. Most of them are designed for the special case of a single treatment \cite{Chickering2000} \cite{Hansotia2002} \cite{Lo2002} \cite{Alemi2009} \cite{Rzepakowski2010} \cite{Radcliffe2011} \cite{Zaniewicz2013} \cite{Guelman2014} \cite{Rzepakowski2015}. Methods for multiple treatments are introduced in \cite{Rzepakowski2012} \cite{clark} and \cite{Zhao2017}. In \cite{Rzepakowski2012}, the tree-based algorithm described in [9] is extended to multiple treatment cases by using a weighted sum of pairwise distributional divergence as the splitting criterion. In \cite{clark}, a multinomial logit formulation is proposed in which treatments are incorporated as binary features. They also explicitly include the interaction terms between treatments and features. What is most relevant to our work is the Contextual Treatment Selection (CTS) algorithm presented in \cite{Zhao2017}. CTS is a tree-based ensemble method. It grows a group of trees, each with a random subsample of the original training data. At each step of the tree growing process, a random subset of all features is drawn as candidates for which an exhaustive search is conducted to find the best splitting point. A split is evaluated by the increase in expected response it can bring as measured on the training data. As far as we are aware of, CTS is the first uplift algorithm that can handle multiple treatments and continuous response. It can lead to significant performance improvement over other applicable methods. 

One drawback with exhaustive search is its susceptibility to outliers. Splits are likely to be placed adjacent to  extreme values. This is especially problematic for uplift trees because the score of a split is affected by estimations for all treatments. Outliers of any treatment can influence the choice of a split point. Furthermore, successive splits tend to group together similar extreme values, introducing more bias into the estimation of expected responses. 

To solve the problem above, we introduce a modified version of CTS algorithm named Unbiased Contextual Treatment Selection (UCTS). The key difference is the separation between the partition of feature space and the estimation of leaf responses. Before growing a tree, UCTS first randomly splits the training data into two subsets, one for selecting tree splits and the other for estimating treatment-wise expected response in the leaf nodes. In Section~\ref{sec:exp} we demonstrate experimentally that UCTS is competitive with CTS using both synthetic and industry provided data. Another advantage of this two-sample approach is that it makes the consistency analysis more tractable. In Section~\ref{sec:theory}, we prove that UCTS can achieve mean-square consistency under mild regularity conditions by properly tuning the "node size" parameter. This is the first consistency result for uplift modeling that we are aware of.

In the reminder of this section we define the notations used throughout this paper.  The UCTS algorithm is described in detail in Section~\ref{sec:alg}. In Section~\ref{sec:exp} we explain the setup and the results of the numerical experiments. The consistency analysis of UCTS is presented in Section~\ref{sec:theory}. Section~\ref{sec:con} ends the paper with a brief summary.

\subsection{Notations}
We use upper case letters to denote random variables and lower case letters their realizations. We use boldface for vectors and normal typeface for scalers. 
\begin{itemize}
	\item $\mathbf{X}$ represents the feature vector and $\mathbf{x}$ its realization. Subscripts are used to indicate specific features. For example, $X_j$ is the $j$th feature in the vector and $x_j$ its realization. Let $\mathscr{X}^d$ denote the $d$-dimensional feature space. 
	
	\item $T$ represents the treatment. We assume there are $K$ different treatments encoded as $\{1,\ldots,K \}$. 
	
	\item Let $Y$ be the response and $y$ its realization. Throughout this paper we assume the larger the value of $Y$, the more desirable the outcome. Denote the expectation of $Y$ conditional on features $\mathbf{X}=\mathbf{x}$ and the treatment $T=t$ as $\mu(\mathbf{x}, t) \equiv \mathbb{E}[Y | \mathbf{X}=\mathbf{x}, T=t]$. 
\end{itemize}
For the priority boarding example mentioned earlier where the airline wants to customize the price of priority boarding to maximize its revenue, $\mathbf{X}$ would be the charactering information of flights such as the origin-destination pair, the date and time of the flights, etc..  $T$ would be a discrete set of candidate prices such as \$5, \$10, \$15. And the response $Y$ would be the revenue for passenger-segments.

Suppose we have a data set of size $n$ containing the joint realization of $(\mathbf{X}, T, Y)$ collected from a randomized experiment. We use superscript $(i)$ to index the samples as below,
$$
\mathcal{S}_n = \left\{ \left( \mathbf{x}^{(i)}, t^{(i)}, y^{(i)}  \right), i=1,\ldots,n\,\right\}.
$$
A treatment selection rule $h$ is a mapping from the feature space to the space of treatments, or $h(\cdot): \mathscr{X}^d \rightarrow \{1,\ldots,K\}$. The goal of Uplift Modeling is to,  based on training data $\mathcal{S}_n$,  find a treatment selection rule $h_n$ such that the expectation $\mathbb{E}[Y|\mathbf{X}, T=h_n(\mathbf{X})]$ is as high as possible.  It is obvious that the maximum expected response is achieved by the point-wise optimal treatment rule $h^*(\mathbf{x}) = \arg\max_{t=1, .., K} \mu(\mathbf{x}, t)$.

\section{Algorithm}\label{sec:alg}

Classification or regression trees, when combined into ensembles, prove to be among the most powerful Machine Learning methods \cite{FD2014}. Almost predictably, the Contextual Treatment Selection (CTS) algorithm, which generates tree-based ensembles, also leads to significant performance improvement for uplift modeling problems \cite{Zhao2017}. In this section we describe a modified version of CTS called the Unbiased Contextual Treatment Selection (UCTS) which eliminates the estimation bias of leaf responses by using separate data sets for partition generation and leaf estimation.

\subsection{Splitting Criteria}
Here we only consider the binary partition approach where each split creates two branches further down the tree. Let $\phi$ be the subset of the feature space associated with the current node. Suppose $s$ is a candidate split that divides $\phi$ into the left child-node $\phi_l$ and the right child-node $\phi_r$. Having $s$ allows us to select different treatments for the child nodes. The added flexibility brings about an increase in expected response which is, 
\begin{align}
\label{eqn:deltamu}
\Delta\mu(s) 
= \; & \mathbb{P}\{\mathbf{X}\in\phi_l | \mathbf{X}\in\phi \}\max_{t_l=1,...,K} \mathbb{E}[Y|\mathbf{X}\in\phi_l, T=t_l]   \nonumber \\
+ \,& \mathbb{P}\{\mathbf{X}\in\phi_r | \mathbf{X}\in\phi \}\max_{t_r=1,...,K} \mathbb{E}[Y|\mathbf{X}\in\phi_r, T=t_r]   \nonumber \\
- \,&\max_{t=1,...,K} \mathbb{E}[Y|\mathbf{X}\in\phi, T=t] .  
\end{align}
At each step of the tree-growing process, we want to select the split $s$ that leads to the largest $\Delta\mu(s)$. The conditional probability of falling into a child node is estimated using the sample fraction, i.e.,
\begin{equation}
\mathbb{P}\{\mathbf{X}\in\phi' | \mathbf{X}\in\phi \} \approx \hat{p}(\phi'|\phi) \equiv \frac{\sum_{i=1}^n \mathbb{I}\{\mathbf{x}^{(i)}\in\phi'\}}{\sum_{i=1}^n \mathbb{I}\{\mathbf{x}^{(i)} \in \phi\}}
\end{equation}
for $\phi' = \phi_l, \phi_r$ and $\mathbb{I}\{\cdot\}$ is the indicator function.

Estimating the conditional expectation requires more care. We need to consider the fact that the estimation is done by treatment. Therefore fewer samples are available. In addition,  treatments may not have equal probabilities in the randomized experiment that generates the training set. Let $n_t( \phi' )$ be the number of samples in $\phi'$ with treatment $t$. Given two user-defined parameters $\mathtt{min\_split}$ and $\mathtt{n\_reg}$,  $\hat{y}(\phi', t)$, the estimator of $\mathbb{E}[Y|\mathbf{X}\in \phi', T=t]$, is defined as follows.\\

\noindent
If $n_t(\phi') \geq \mathtt{min\_split}$,
\begin{equation}
\hat{y}_t(\phi')  = \frac{  \sum_{i=1}^n y^{(i)}\mathbb{I}\{ \mathbf{x}^{(i)} \in \phi'\}  \mathbb{I}\{t^{(i)}=t \}  + \hat{y}_t(\phi)\cdot \mathtt{n\_reg}   }{\sum_{i=1}^n \mathbb{I}\{ \mathbf{x}^{(i)} \in \phi'\}  \mathbb{I}\{t^{(i)}=t \}  + \mathtt{n\_reg} }, \label{eq:nreg}
\end{equation}
otherwise
\begin{equation}
\hat{y}_t(\phi') = \hat{y}_t(\phi),
\end{equation}
where $\phi$ is the parent node of $\phi'$. To initialize this recursive definition, estimation of the root node $\hat{y}_t(\mathscr{X}^d)$ is set to the sample average. Letting $\hat{y}_t(\phi') $ inherit its parent node estimation $\hat{y}_t(\phi)$ when there are not enough samples allows the tree grow to full extend while ensuring reliable estimation for minority treatments. To summarize, the score of a split $s$ is computed as,
\begin{align}
\hat{\Delta\mu}(s)  = &\quad \hat{p}(\phi_l|\phi)\times \max_{t=1,...,K} \hat{y}_t(\phi_l)   \nonumber\\
+  &\quad \hat{p}(\phi_r|\phi) \times \max_{t=1,...,K} \hat{y}_t(\phi_r) \nonumber \\
-  & \quad  \max_{t=1,...,K} \hat{y}_t(\phi). \label{eqn:criterion}
\end{align}

\noindent
\textbf{$\alpha$-Regularity}

To avoid having severely unbalanced trees, UCTS requires that selected splits must leave at least a fraction $\alpha$ of available training examples on each side of the split for some user-defined $\alpha \in (0, 0.5)$.

\subsection{Termination Rules}
UCTS considers a node as a terminal node if the number of samples in the node is less than $\mathtt{min\_split}$ for all treatments 

\subsection{Leaf Response Estimation}\label{subsec:leaf}
In order to eliminate the bias, UCTS uses a separate set of data to estimate the leaf response from the set by which the partition is generated. This is achieved by randomly splitting the training set $\mathcal{S}_n$ into the approximation set $\mathcal{S}^A$ and the estimation set $\mathcal{S}^E$. For a user-defined parameter $\mathtt{rho} \in (0, 1)$, $\mathcal{S}^A$ contains a fraction $\mathtt{rho}$ of the examples in $\mathcal{S}_n$ sampled by treatment. $\mathcal{S}^E$ contains the rest of the data. 

Of the two sets, $\mathcal{S}^A$ is used to generate the tree structure using the splitting criteria and terminations conditions described above.  Let $\Phi$ be the set of nodes of a tree grown with $\mathcal{S}^A$. For any $\phi \in \Phi$, denote as $\mathcal{S}^E(\phi, t)$ the examples in $\mathcal{S}^E$ that fall into $\phi$ with treatment $t$. If $\mathcal{S}^E(\phi, t)$ is not empty, then the conditional expected response in $\phi$ under treatment $t$ is estimated as the sample average of $\mathcal{S}^E(\phi, t)$. If otherwise, then $\phi$ inherits the estimation from its parent node. We assume that $\mathcal{S}^E$ contains samples of all treatments at least for the root node. By this definition, we can get estimations for the root node first and then traverse down level by level until all nodes are estimated.

\subsection{Algorithm}

To reduce the high variance associated with a single tree, UCTS generates a forest of trees in a way similar to Random Forest \cite{Breiman2001}. The algorithm is outlined below. 

\begin{algorithm}[ht]
	\caption{Unbiased Contextual Treatment Selection}
	\begin{algorithmic}[2]
		\item[\bf{Input}:]training data $\mathcal{S}_n$, fraction of data used for partition generation $\mathtt{rho}$, number of trees $\mathtt{ntree}$, number of features to be considered for a split $\mathtt{mtry} \in \{1, ..., d\}$, the feature randomization factor $\mathtt{pi} \in (0, 1)$, the minimum number of samples required for a split $\mathtt{min\_split}$, the regularity factor $\mathtt{n\_reg}$, the tree-balance factor $\mathtt{alpha}$
		\item[\bf{Training}:]
		For $b = 1:\mathtt{ntree}$
		\begin{itemize}
			\item[1.] Draw $\mathrm{round}( \mathtt{rho} \times n)$ samples from $\mathcal{S}_n$ to create the approximation set $\mathcal{S}^A$. Samples are drawn proportionally from each treatment. The estimation set $\mathcal{S}^E = \mathcal{S}_n - \mathcal{S}^A$.
			\item[2.] Build a tree from $\mathcal{S}^A$. At each step of the growing process, one coordinate is drawn at random with probability $\mathtt{pi}$ , or $\mathtt{mtry}$ coordinates are drawn at random with probability $1-\mathtt{pi}$. We perform the split that has the largest $\hat{\Delta\mu}$ among all the $\mathtt{alpha}$-regular splits on the selected coordinate or coordinates. The output of this step is the set of nodes $\Phi$ of the tree. 
            \item[3.]  With $\mathcal{S}^E$ we estimate the conditional expectation under each treatment for all the nodes in $\Phi$ as described in Section~\ref{subsec:leaf}.
		\end{itemize}
		\item[{\bf Prediction}:] Given a test point, the predicted expected response under a treatment is the average of the predictions from all the trees. The optimal treatment is the one with the largest predicted expected response.
	\end{algorithmic}
	\label{alg:ucts}
\end{algorithm}

\section{Experiments}\label{sec:exp}

One of the challenges for testing uplift algorithms is the lack of publicly available randomized experiments data. In this section, we first use a simple two-dimensional data model to illustrate the behavioral difference between UCTS and CTS. Then, the performance of UCTS is tested on two larger data sets. The first one is a 50-dimensional synthetic data set. The second is industry provided data on the pricing of priority boarding of flights. These two data sets are the same ones tested in \cite{Zhao2017} which allows us to directly compare with their results.

\subsection{Simple 2D Example}\label{subsec:exp1}
 
Consider a two-dimensional feature space. The first feature $X_1$ is continuous and uniformly distributed between $0$ and $100$, i.e., $X_1 \sim \mathrm{U}[0, 100]$. The second feature $X_2$ takes discrete values $\{A, B, C\}$ each with probability $1/3$. There are two treatments and the response under each treatment is defined as below. 
$$
\begin{array}{ll}
\text{If } T=1, & Y \sim \mathrm{U}[0, X_1]. \\
&\\
\text{If } T=2, &
Y \sim 
\begin{cases}
	0.8*\mathrm{U}[0, X_1] + 5 & \text{if } X_2=B, \\
	1.2*\mathrm{U}[0, X_1] -5   & \text{if } X_2=A \text{ or } C.
\end{cases} 
\end{array} 
$$

\begin{figure}[!h]
	\centering
	\includegraphics[width=0.6\linewidth]{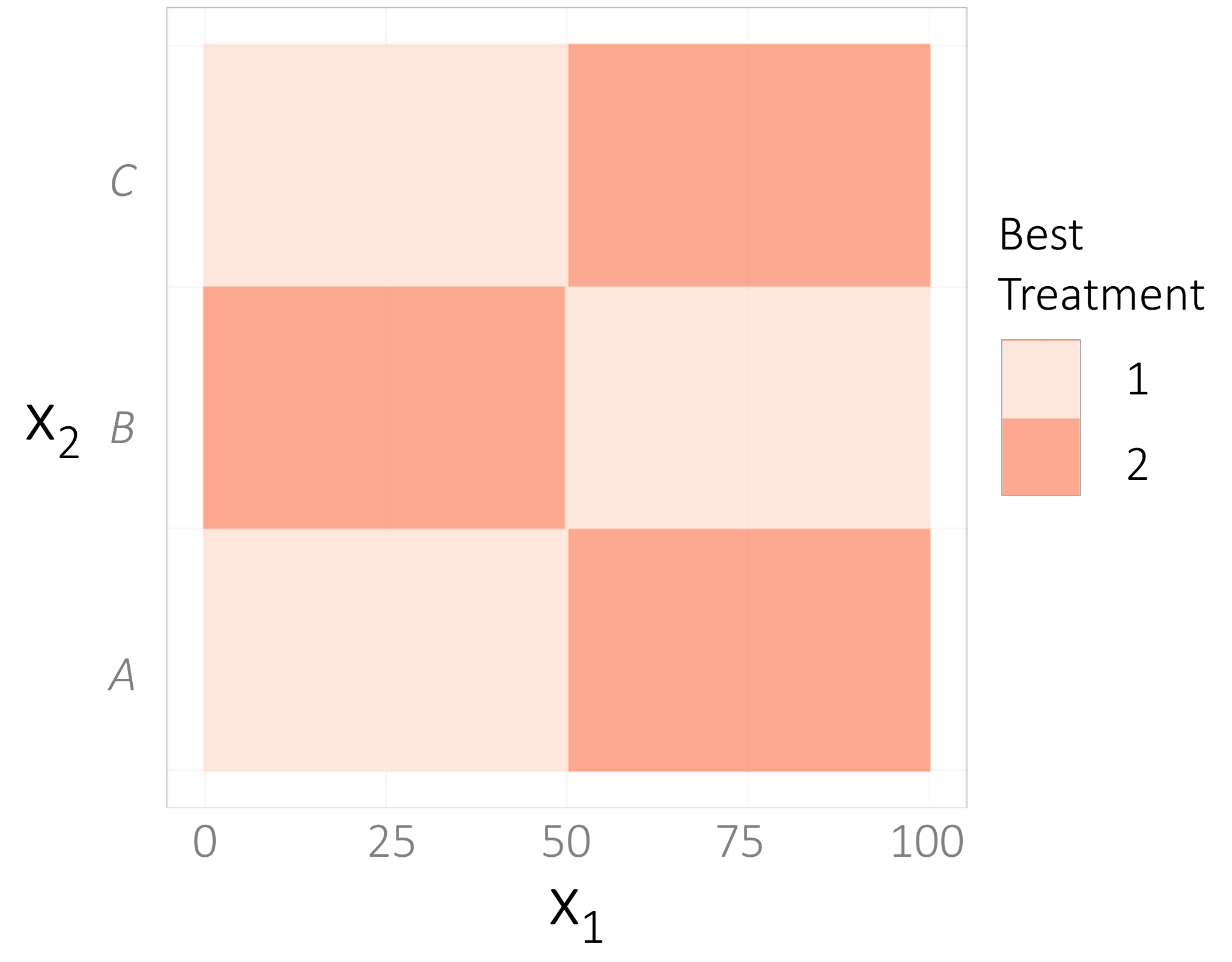}
	\caption{The optimal treatment rule for the 2D example in Section~\ref{subsec:exp1}. The vertical boundary in the middle of the plot is located at $X_1=50$. Note that the vertical axis $X_2$ is a discrete variable but illustrated like a continuous one for simplicity.}
	\label{fig:ex1-true-boundary}
\end{figure}

The optimal treatment rule for this data model is illustrated in Fig.~\ref{fig:ex1-true-boundary}. The vertical boundary in the middle is located at $X_1=50$. Feature $X_2$ is plotted like a continuous variable so that we could have a 2D image. Note that, although the optimal treatment assignment exhibits a sharp change at $X_1=50$, the actual difference between treatments changes smoothly with $X_1$ and is zero at the middle. Therefore the algorithms are likely have some difficulty identifying the correct treatment around $X_1=50$. Another characteristic of this example is that the variance in response grows quadratically with $X_1$. Because CTS is more susceptible to extremes values than UCTS, we should expect their behaviors to be more different when $X_1$ is large. 

\begin{figure}[!ht]
	\centering
	\includegraphics[width=0.9\linewidth]{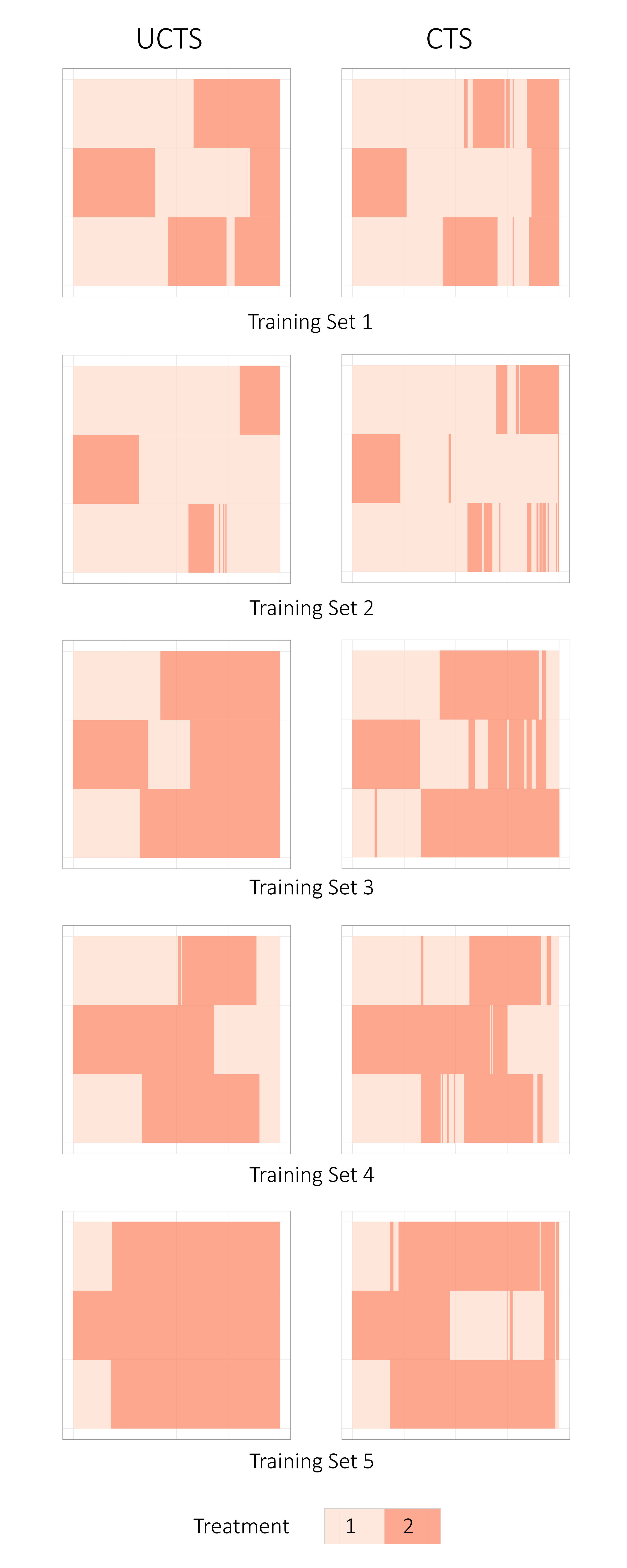}
	\caption{The treatment rule reconstructed by UCTS and CTS for the 2D example in Section~\ref{subsec:exp1}. Plots on the same row are generated from the same training set. For each individual plot, the horizontal axis is feature $X_1$ and the vertical axis feature $X_2$. The labels and ticks of the axes are the same as those in Fig.~\ref{fig:ex1-true-boundary} and omitted here for simplicity.}
	\label{fig:ex1-reconstructed}
\end{figure}

To have a fair comparison of the behaviors of UCTS and CTS, we must first find their optimal parameters, specifically, $\mathtt{rho}$ and $\mathtt{min\_split}$ for UCTS and $\mathtt{min\_split}$ for CTS. The parameters are selected based on the performance of models trained with $20$ different training sets as measured by the true data model. As a result, for the training size of 1000 samples per treatment, we have $\mathtt{rho}=0.5$ and $\mathtt{min\_split}=80$ for UCTS and $\mathtt{min\_split}=80$ for CTS. Then, 5 more training sets are sampled and the decision boundary reconstructed by the two algorithms with chosen parameters are plotted in Fig.~\ref{fig:ex1-reconstructed}. We can see that the decision boundary generated by UCTS is much smoother than that by CTS for all training sets. This is especially the case on the right side of each plot when the variance in response is high and extreme values are more common. 

\begin{figure}[!h]
	\centering
	\includegraphics[width=0.7\linewidth]{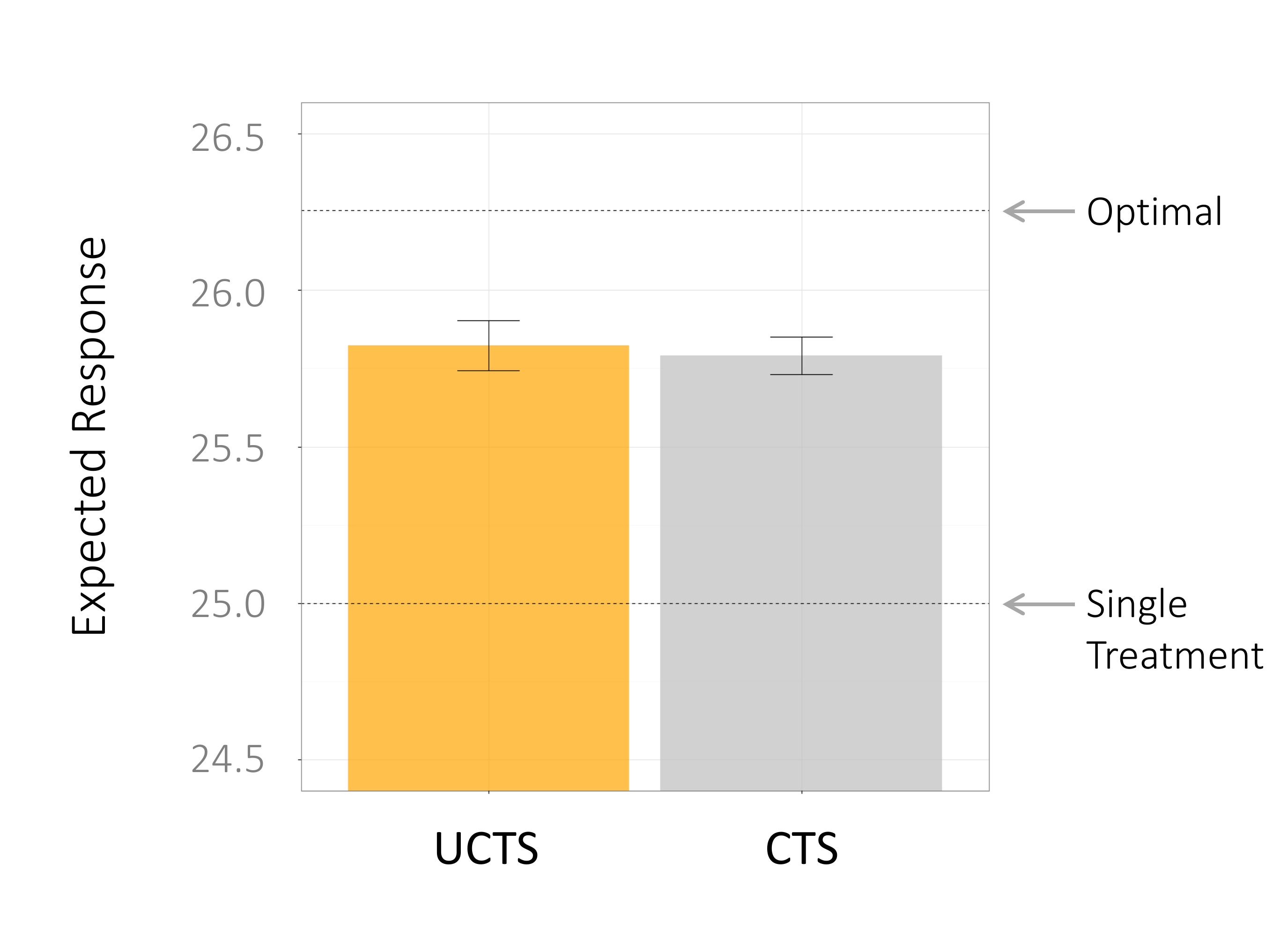}
	\caption{Average expected response under UCTS and CTS models for the 2D example in Section~\ref{subsec:exp1} computed from $50$ training sets. 95\% confidence interval is also shown in the plot. The lower horizontal dash line indicates the expected response from a fixed single treatment and the upper one from the optimal treatment rule.}
	\label{fig:ex1-performance}
\end{figure}

To verify that UCTS is not sacrificing performance for smoothness, we compare the results of UCTS models and CTS models generated from $50$ training sets. The expected response under each model is estimated using the true data model. The average performance and the 95\% confidence interval are plotted in Fig.~\ref{fig:ex1-performance}. We can see that UCTS is fully competitive with CTS.

\subsection{High-Dimensional Synthetic Data}\label{subsec:exp2}

While the 2D example is helpful for us to understand the behavioral difference between UCTS and CTS, it might not be complex enough to represent real world scenarios. In this subsection we consider a 50-dimensional data model with a much more complex response distribution. This is also the data model used in Section~4.1 of \cite{Zhao2017} which allows us to compare our results with theirs. 

The feature space is the fifty-dimensional hyper-cube of length 10. Features are uniformly distributed in the feature space, i.e., $X_d \sim \mathrm{U}[\,0, 10\,]$, for $d=1,...,50$. There are four different treatments, $T=1,2,3,4$, and the response under each treatment is defined as below. 
\begin{equation}
Y = \left\{
\begin{array}{rl}
f(\mathbf{X}) + \mathrm{U}[0, \alpha X_1] + \epsilon  & \text{if } T=1, \\
f(\mathbf{X}) + \mathrm{U}[0, \alpha X_2] + \epsilon & \text{if } T=2, \\
f(\mathbf{X}) + \mathrm{U}[0, \alpha X_3] + \epsilon & \text{if } T=3, \\
f(\mathbf{X}) + \mathrm{U}[0, \alpha X_4] + \epsilon & \text{if } T=4.
\end{array} \right.
\end{equation}
The first term $f(\mathbf{X})$ is a mixture of $50$ exponential functions defined on $[0, 10]^{50}$. This term is the same for all treatments and represents the systematic dependence of the response on the features. The second term $\mathrm{U}[0, \alpha X_t]$ is the treatment effect and $\alpha$ determines the magnitude of the effect. The third term $\epsilon$ is the zero-mean Gaussian noise which standard deviation is set to twice the magnitude of the treatment effect\footnote{Exact values of data model parameters and datasets can be found at this Dropbox link \url{https://www.dropbox.com/sh/sf7nu2uw8tcwreu/AAAhqQnaUpR5vCfxSsYsM4Tda?dl=0}}. By the symmetry of the model we can see that the expected response is the same for all treatments which is estimated to be 5.18 using Monte Carlo simulation on 10,000,000 samples. Similarly, the expected response under the optimal treatment rule is estimated to be 5.79. 

The performance of UCTS is tested under different training data sizes, specifically, 500, 2000, 4000, 8000, 16000, and 32000 samples per treatment. For each size, 10 training sets and test sets are provided in \cite{Zhao2017}. We use the results from these data to generate the 95\% margin of error. When training each model, we have $\mathtt{rho}=0.5$, $\mathtt{ntree}=400$, $\mathtt{mtry}=25$, $\mathtt{pi}=0.05$, $\mathtt{nreg}=0$ and $\mathtt{alpha}=0.1$. The most important parameter $\mathtt{min\_split}$ is selected by the validation set (30\% of total training data). The results are plotted in Fig.~\ref{fig:ex2-ey}. 

\begin{figure}[!h]
	\centering
	\includegraphics[width=\linewidth]{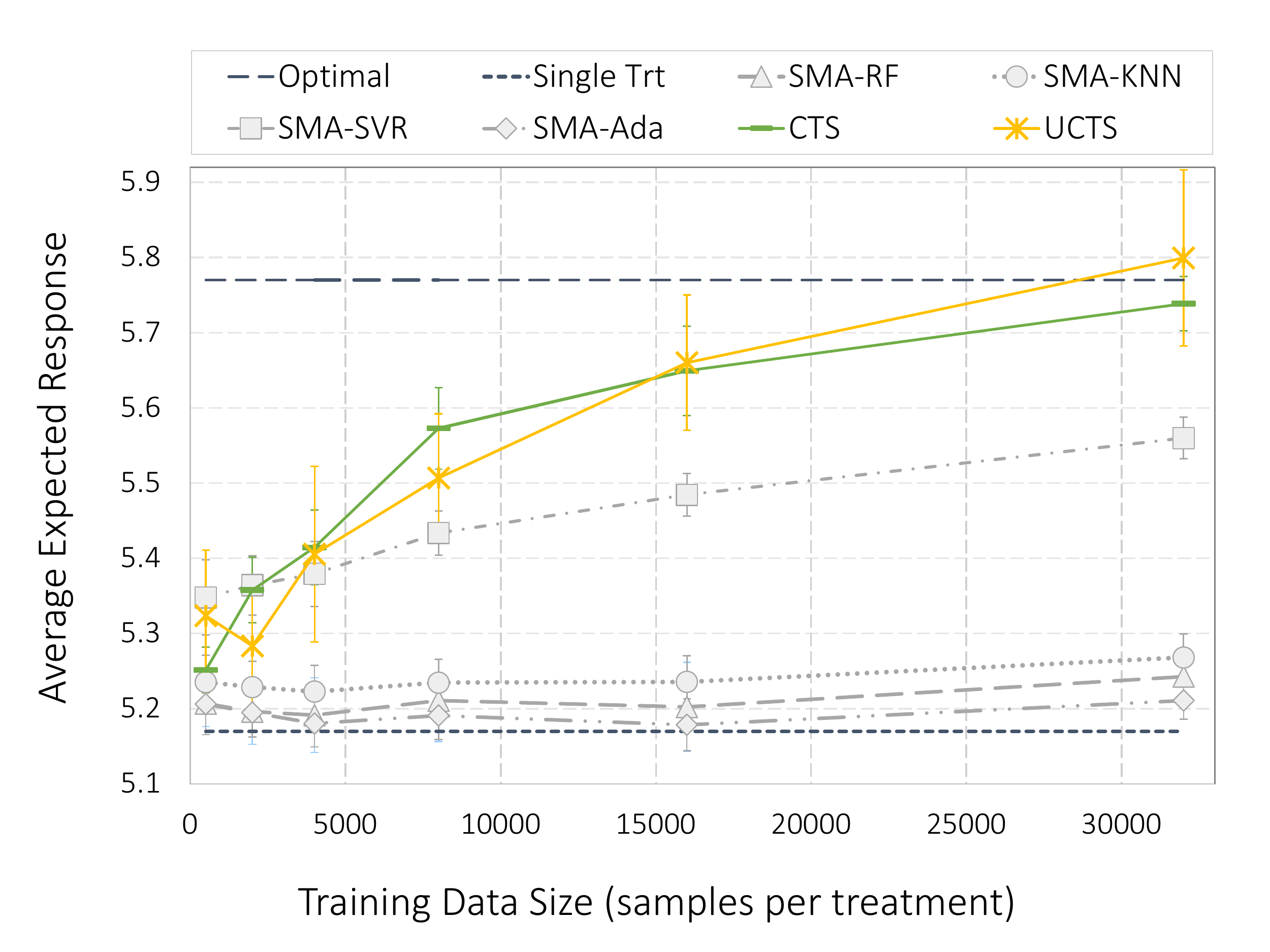}
	\caption{Averaged expected response of different algorithms for the data model in Section~\ref{subsec:exp2}. The 95\% margin of error is computed with results from 10 different training datasets. For each data size, all algorithms are tested on the same 10 datasets.}
	\label{fig:ex2-ey}
\end{figure}

In Fig.~\ref{fig:ex2-ey} the results of UCTS (yellow line with crosses) are plotted together with those of 5 different algorithms, including CTS (green line with horizontal bars). The other 4 methods are Separate Model Approach with Random Forest (SMA-RF), K-Nearest Neighbor (SMA-KNN), Support Vector Regressor with Radial Basis Kernel (SMA-SVR), and AdaBoost (SMA-Ada). From the figure we can see that UCTS and CTS outperform Separate Model Approaches when the training size is greater than 4,000. By training size 32,000, they have almost achieved the optimal performance. Meanwhile, the 95\% margins of error of UCTS and CTS overlap for every training size. It is not unreasonable to say that they have comparable performance for this particular data model.

\subsection{Priority Boarding Data}\label{subsec:exp3}

As we have mentioned in the introduction, one of the applications of Uplift Modeling is customized pricing. In this example we apply uplift algorithms to select the price of priority boarding of airlines based on flight information. The data is provided by one of the major airlines in Europe. In the data set, half of the passengers receive the default price of \EUR{5} and half receives the treatment price of \EUR{7}. Interestingly, the two prices lead to the same \EUR{0.42} average revenue per passenger overall. A total of 9 features are derived based on the information of the flight and of the reservation. These are the origin station, the origin-destination pair, the departure weekday, the arrival weekday, the number of days between flight booking and departure, flight fare, flight fare per passenger, flight fare per passenger per mile, and the group size.

The performance of UCTS is compared with those of 6 other methods which are the separate model approach with Random Forest (SMA-RF), Support Vector Machine (SMA-SVM), Adaboost (SMA-Ada), K-Nearest Neighbors (SMA-KNN), as well as  the uplift Random Forest method implemented in \cite{Guelman2014}, and CTS. The data is randomly split into the training set (225,000 samples per treatment) and the test set (75,000 samples per treatment). For UCTS, we have $\mathtt{ntree}=400$, $\mathtt{mtry}=3$, $\mathtt{pi}=0.05$, $\mathtt{nreg}=0$ and $\mathtt{alpha}=0.1$. According to the results on the validation set (30\% of training data), we set $\texttt{rho}=0.45$ and $\texttt{min\_split}=5$. Details on parameter tuning of the 6 other methods can be found in the Appendix of \cite{Zhao2017}. 

The expected revenue from each algorithm is plotted in Fig.~\ref{fig:pb-bar}. The benefit of applying specialized uplift algorithms is apparent. The best result from Separate Model Approach is \EUR{0.45} which is 7\% increase relative to fixed pricing. However, with UCTS, we can achieve an astonishing 29\% increase.

\begin{figure}[!h]
	\centering
	\includegraphics[width=\linewidth]{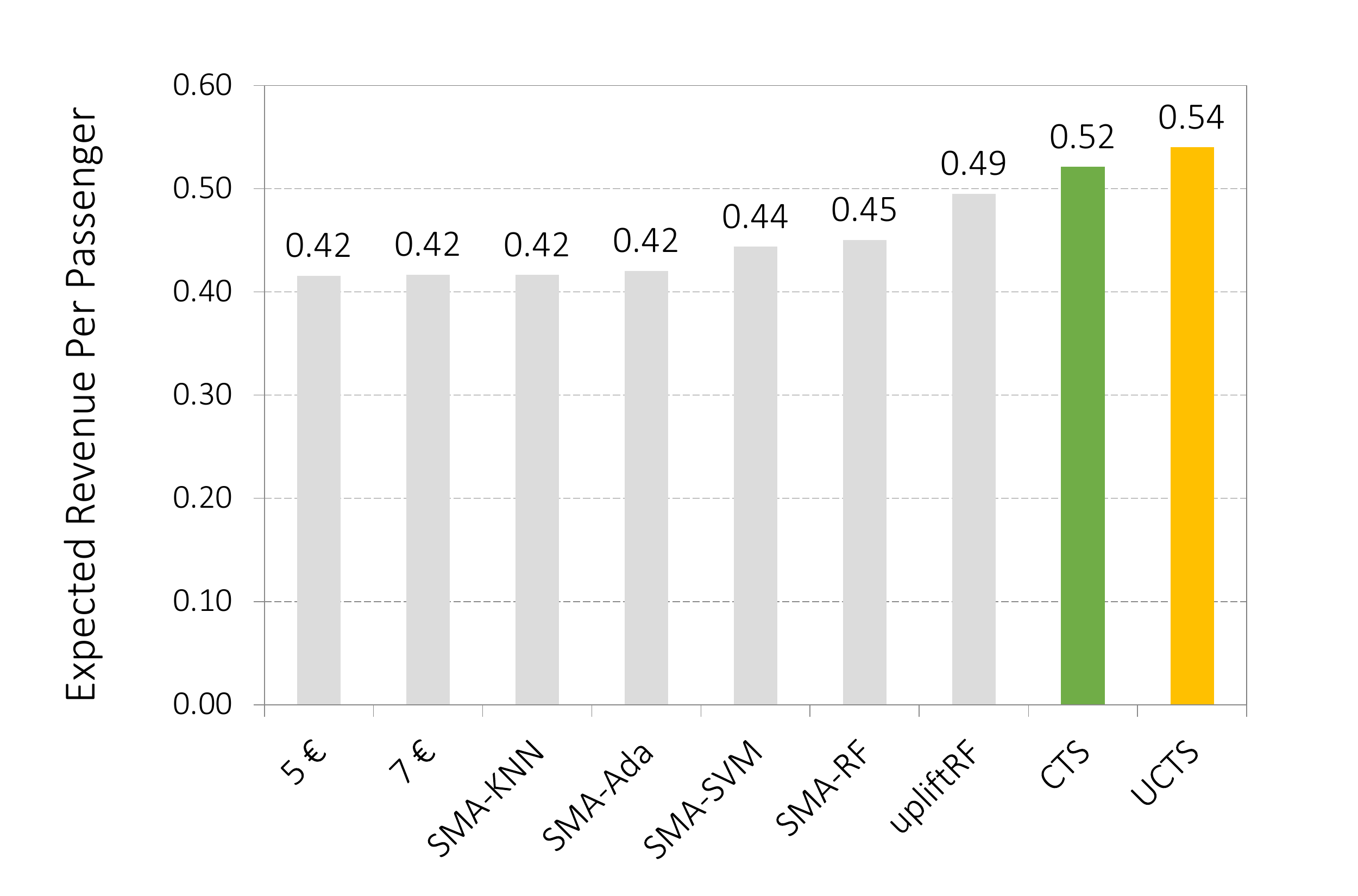}
	\caption{Expected revenue per passenger from priority boarding based on different models.}
	\label{fig:pb-bar}
\end{figure}

We also plot the Modified Uplift Curves (MUC) of the 7 methods in Fig.~\ref{fig:pb-muc}. The horizontal axis in a MUC indicates the percentage of population subject to treatments (while others receiving the control). The vertical axis is the expected response at a given percentage. The MUC is a useful tool for balancing the gain from customizing treatment assignment and the risk of exposing subjects to treatments. In Fig.~\ref{fig:pb-muc} we can see that UCTS achieves a higher expected response than other methods for any given percentage. 

\begin{figure}[!h]
	\centering
	\includegraphics[width=\linewidth]{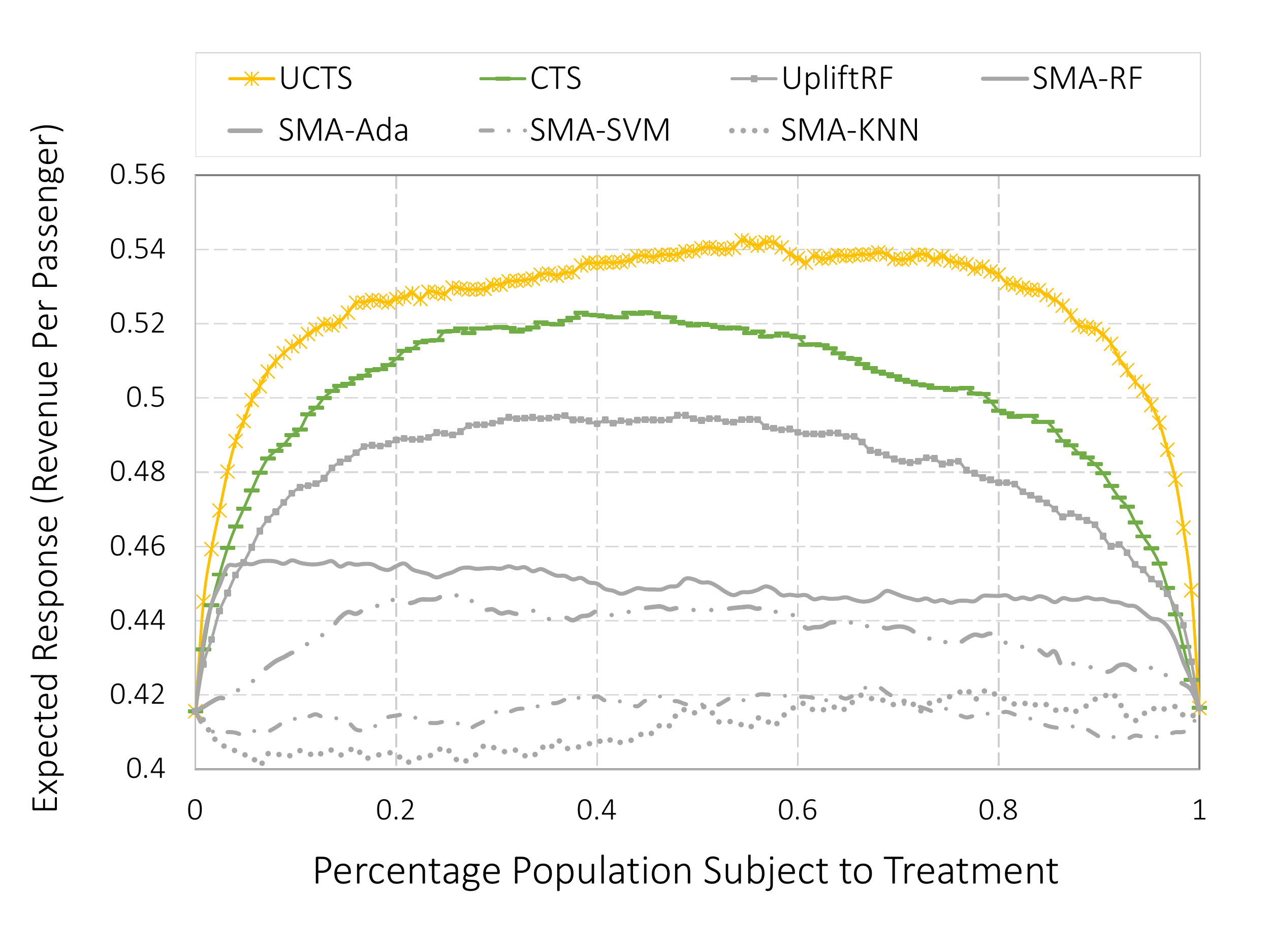}
	\caption{Modified uplift curves of different algorithms for the priority boarding data.}
	\label{fig:pb-muc}
\end{figure}

Knowing that there not exist a learning algorithm which always performs better than others regardless of the underlying data model \cite{Wolpert1996}, we hope we have demonstrated with the experiments in this section that UCTS can be competitive with CTS for some data sets. In the next section we present a distinct advantage of UCTS which is its provable consistency.

\section{Consistency Analysis}\label{sec:theory} 

Tree-based ensemble methods have eluded theoretical analysis for many years. Since its publication in 2001 \cite{Breiman2001}, Random Forest has become a major analytical tool in many areas of application with its stable and excellent performance. Yet it is still an open question whether the algorithm is consistent or not. The difficulties in analysis come partly from the fact that the algorithm is highly data-dependent and partly from the randomization procedure. In recent years, there have been several critical attempts in making the gap between theory and practice narrower. For a more detailed summary of these results please refer to the Introduction of \cite{Scornet2015}.

Uplift modeling is in a similar situation. Many algorithms have been proposed in the past two decades and some have achieved promising results on various data sets. However, to the best of our knowledge, there has been very few publication about the theoretical properties of these algorithms. In order to fill the vacancy in literature and to understand the behavior of the algorithm, in this Section we provide a proof of consistency for the proposed UCTS algorithm. Unlike the theoretical studies on Random Forest which often concentrate on simplified versions of the procedure, our proof is for UCTS exactly as described in Algorithm~\ref{alg:ucts}.

\subsection{Consistency of Uplift Algorithms}\label{subsec4-1}
The general framework of uplift modeling is that, after observing the feature vector $\mathbf{X} \in \mathscr{X}^d$ of a subject, the decision maker applies a treatment $T \in \{1, 2, .., K\}$ to the subject and observes its response $Y$.  Assume $Y \sim \mu(\mathbf{X}, T) + \epsilon(\mathbf{X}, T)$ where $\epsilon(\mathbf{X}, T)$ is a  zero-mean random noise that may depend on $\mathbf{X}$ and $T$. Then the conditional expectation is simply
$$
\mu(\mathbf{x}, t) \equiv \mathbb{E}[Y|\mathbf{X}=\mathbf{x}, T=t].
$$
A treatment selection rule is a mapping from the feature space to treatments, i.e., $h(\cdot): \mathscr{X}^d \rightarrow \{1, 2, ..., K\}$. Denote the expected response under a treatment rule $h$ as 
$$
v(h) \equiv \mathbb{E}[Y|\mathbf{X}, T=h(\mathbf{X})] = \mathbb{E}\left\{\mu\left[ \mathbf{X}, h(\mathbf{X})\right]\right\}
$$
where the expectation is taken over $\mathbf{X}$. It is obvious that the maximum expected response is achieved by the point-wise optimal treatment rule $h^*(\mathbf{x}) = \arg\max_{t=1, .., K} \mu(\mathbf{x}, t)$. 

Given a set of $n$ samples $\mathcal{S}_n = \{ ( \mathbf{x}^{(i)}, t^{(i)}, y^{(i)}  ), i=1,\ldots,n\}$ from a randomized experiment, the goal of an uplift algorithm is to construct a treatment selection rule $h_n$ such that $v(h_n)$ is as close to $v(h^*)$ as possible. In this sense, we can define the consistency of uplift algorithms as the following.

\begin{definition}
An uplift algorithm is $\mathbf{L^2}$ \textbf{Consistent} if 
$$
\lim_{n\rightarrow +\infty} \mathbb{E} \left\{ \mu\left[ \mathbf{X}, h^*(\mathbf{X})\right] - \mu\left[ \mathbf{X}, h_n(\mathbf{X})\right] \right\}^2 = 0,
$$
where the expectation is taken over both the test example $\mathbf{X}$ and the training data $\mathcal{S}_n$.
\end{definition}

A UCTS model consists of a collection of $B$ randomized uplift trees each of which is an estimator of $\mu(\mathbf{x}, t)$. For the $b$th tree in the forest, the predicted value at query point $(\mathbf{x}, t)$ is denoted as $\mu_n(\mathbf{x}, t ; \Theta_b, \mathcal{S}_n)$, where $\Theta_1, ..., \Theta_B$ are independent random variables, distributed as a generic random variable $\Theta$ and independent of $\mathcal{S}_n$. This auxiliary random variable is used to subsample training data for each tree and to select splitting variables. Averaging tree predictions gives us the predicted value of the forest at $(\mathbf{x}, t)$,
$$
\mu_{B, n}( \mathbf{x}, t ; \Theta_1, ..., \Theta_B, \mathcal{S}_n ) = \frac{1}{B} \sum_{b=1}^B \mu_n(\mathbf{x}, t ; \Theta_b, \mathcal{S}_n).
$$
From now on we abbreviate $\mu_{B, n}( \mathbf{x}, t ; \Theta_1, ..., \Theta_B, \mathcal{S}_n )$ as $\mu_n(\mathbf{x}, t)$ to lighten the notation while it should have been made clear the dependence of $\mu_n(\mathbf{x}, t)$ on the training data, the auxiliary randomness and the number of trees $B$. Given the estimator $\mu_n(\mathbf{x}, t)$, the treatment rule $h_n(\mathbf{x})$ is simply defined as
$$
h_n(\mathbf{x}) = \arg\max_{t=1, .., K} \left[\; \mu_n(\mathbf{x}, t \; \right]
$$
with ties breaking randomly. 

\begin{lemma}\label{lemma1}
	If for each $t=1, ..., K$ we have $\lim_{n\rightarrow \infty} \mathbb{E} \left\{  \mu_n(\mathbf{X}, t) - \mu(\mathbf{X}, t) \right\}^2 = 0$ where the expectation is taken over $\mathbf{X}$, $\mathcal{S}_n$ and $\Theta$, then 
	$$
	\lim_{n\rightarrow \infty} \mathbb{E} \left\{ \mu\left[ \mathbf{X}, h^*(\mathbf{X})\right] - \mu\left[ \mathbf{X}, h_n(\mathbf{X})\right] \right\}^2 = 0. 
	$$
\end{lemma}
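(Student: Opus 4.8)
The plan is to reduce the statement to a pointwise inequality that controls the regret of the learned rule $h_n$ by the estimation error of the forest $\mu_n(\cdot,t)$, uniformly over treatments, and then integrate. Fix a realization of the test point $\mathbf{x}$, the training data $\mathcal{S}_n$ and the auxiliary randomness $\Theta$, and write $t^* = h^*(\mathbf{x})$ and $\hat{t} = h_n(\mathbf{x})$. Since $h_n$ is by definition an argmax of $t \mapsto \mu_n(\mathbf{x},t)$ (ties broken arbitrarily), we have $\mu_n(\mathbf{x},\hat{t}) \ge \mu_n(\mathbf{x},t^*)$ regardless of how ties are resolved. The first step is the algebraic identity
\begin{align*}
\mu(\mathbf{x},t^*) - \mu(\mathbf{x},\hat{t})
&= \bigl[\mu(\mathbf{x},t^*) - \mu_n(\mathbf{x},t^*)\bigr]
 + \bigl[\mu_n(\mathbf{x},t^*) - \mu_n(\mathbf{x},\hat{t})\bigr]
 + \bigl[\mu_n(\mathbf{x},\hat{t}) - \mu(\mathbf{x},\hat{t})\bigr].
\end{align*}
The middle bracket is $\le 0$, and the left-hand side is $\ge 0$ by optimality of $h^*$, so
\[
0 \le \mu(\mathbf{x},t^*) - \mu(\mathbf{x},\hat{t}) \le \bigl[\mu(\mathbf{x},t^*) - \mu_n(\mathbf{x},t^*)\bigr] + \bigl[\mu_n(\mathbf{x},\hat{t}) - \mu(\mathbf{x},\hat{t})\bigr] \le 2\max_{t=1,\dots,K}\bigl|\mu_n(\mathbf{x},t) - \mu(\mathbf{x},t)\bigr|.
\]

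The second step is to square this bound and dominate the maximum by a sum, giving
\[
\bigl\{\mu[\mathbf{x},h^*(\mathbf{x})] - \mu[\mathbf{x},h_n(\mathbf{x})]\bigr\}^2 \le 4\max_{t=1,\dots,K}\bigl|\mu_n(\mathbf{x},t) - \mu(\mathbf{x},t)\bigr|^2 \le 4\sum_{t=1}^K \bigl|\mu_n(\mathbf{x},t) - \mu(\mathbf{x},t)\bigr|^2.
\]
This inequality holds for every $\mathbf{x}$, $\mathcal{S}_n$ and $\Theta$, so the final step is simply to take expectations over $\mathbf{X}$, $\mathcal{S}_n$ and $\Theta$, interchange the (finite) sum with the expectation, and invoke the hypothesis that $\mathbb{E}\{\mu_n(\mathbf{X},t) - \mu(\mathbf{X},t)\}^2 \to 0$ for each of the $K$ treatments:
\[
\mathbb{E}\bigl\{\mu[\mathbf{X},h^*(\mathbf{X})] - \mu[\mathbf{X},h_n(\mathbf{X})]\bigr\}^2 \le 4\sum_{t=1}^K \mathbb{E}\bigl\{\mu_n(\mathbf{X},t) - \mu(\mathbf{X},t)\bigr\}^2 \xrightarrow[n\to\infty]{} 0.
\]

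There is essentially no hard step here: the argument is a deterministic sandwich inequality followed by a union-type bound over the finitely many treatments. The only points that require a word of care are (i) making sure the bound is insensitive to the tie-breaking convention in the definition of $h_n$ — which it is, since we only use the inequality $\mu_n(\mathbf{x},\hat{t}) \ge \mu_n(\mathbf{x},t^*)$ — and (ii) keeping track of the fact that $h_n$ depends on $\mathcal{S}_n$ and $\Theta$ while $h^*$ does not, so that the final expectation is genuinely over the same triple $(\mathbf{X},\mathcal{S}_n,\Theta)$ appearing in the hypothesis. Finiteness of $K$ is what makes the step from $\max_t$ to $\sum_t$ harmless.
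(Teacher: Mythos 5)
Your proof is correct, and it takes a genuinely different and more economical route than the paper. The paper's proof fixes $\epsilon>0$, splits the regret according to whether the true gap $\mu(\mathbf{X},t)-\mu(\mathbf{X},t')$ exceeds $\sqrt{\epsilon/2}$, bounds the squared gap on the bad event by $4\mathrm{C}_Y^2$ (so it implicitly leans on the boundedness assumption $|Y|\le \mathrm{C}_Y$ stated later in Section IV), and then converts the $L^2$ hypothesis into a probability bound via a Chebyshev-type step, summing over ordered pairs $t\neq t'$. You instead prove the deterministic pointwise sandwich
\[
0 \le \mu\left[\mathbf{x},h^*(\mathbf{x})\right]-\mu\left[\mathbf{x},h_n(\mathbf{x})\right] \le 2\max_{t}\left|\mu_n(\mathbf{x},t)-\mu(\mathbf{x},t)\right|,
\]
square it, dominate the max by the sum over the $K$ treatments, and integrate. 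This buys you three things: the argument needs no boundedness of $Y$ or of $\mu$, so it proves the lemma exactly as stated (under only the $L^2$ hypothesis); it is quantitative, giving the explicit rate $\mathbb{E}\{\mu[\mathbf{X},h^*(\mathbf{X})]-\mu[\mathbf{X},h_n(\mathbf{X})]\}^2 \le 4\sum_{t=1}^K \mathbb{E}\{\mu_n(\mathbf{X},t)-\mu(\mathbf{X},t)\}^2$ rather than a bare limit; and it is robust to the tie-breaking convention for exactly the reason you note, namely that only $\mu_n(\mathbf{x},\hat{t})\ge\mu_n(\mathbf{x},t^*)$ is used. What the paper's $\epsilon$-decomposition buys in exchange is essentially nothing for this lemma --- its case analysis would only become advantageous if one wanted to relax the $L^2$ hypothesis to convergence in probability plus a uniform bound, since in that setting a pointwise $L^2$-to-$L^2$ bound like yours is not available.
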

\begin{proof}
	See Appendix~\ref{proof:lemma1}.
\end{proof}

 Lemma~\ref{lemma1} establishes a connection between the $L^2$ consistency of uplift problems to that of regression problems.  The key here is that we need to ensure the consistency of $\mu_n(\mathbf{X}, t)$ simultaneously for all treatments for which we need more detail about recursive partitioning algorithms.

\subsection{Recursive Partitioning}\label{subsec4-2}
Let $\Lambda = \{L_1, ..., L_M\}$ be a partition of the feature space generated by a recursive partitioning algorithm as represented by the leaf nodes. Given a point $\mathbf{x}$ in the feature space, denote as $L(\mathbf{x})$ the element of $\Lambda$ that contains $\mathbf{x}$. Suppose features $\mathbf{X}$ are distributed according to a density function $f(\cdot)$. Then let $f(L) = \int_{L} f(x) \mathrm{d} x$ be the expected fraction of samples in leaf node $L$. Given a set of training examples, let $\#L$ be the number of examples in $L$. In the paper we only consider the case where splits are orthogonal to the splitting variables. Therefore all leaves are rectangles and let $diam_j(L)$ be the length of $L$ along the $j$th coordinate. To rigorously describe the theoretical results, we introduce the following definitions.

\begin{definition}
	A tree is a \emph{random-split} tree if at every step of the tree-growing procedure, marginalizing over $\Theta$, the probability that the next split occurs along the $j$-th feature is bounded below by $\pi/d$ for some $0 < \pi < 1$, for all $j=1,...,d$.
\end{definition}

\begin{definition}\footnote{Our definition of regularity is different from Definition 4b in \cite{Wager2017} which requires all treatments have at least $k$ samples. We need to point out that the latter is likely to be ill-defined when $k$ is small. Consider a 1D example with 10 samples. If 5 samples have treatment 1 and $x <0$, the other 5 have treatment 2 and $x>0$, then the regularity conditions listed in \cite{Wager2017} can not be satisfied with any choice of $k$.}
	An uplift tree is \emph{$(\alpha, k, l)$-regular} for some $0<\alpha<0.5$ if each split leaves at least a fraction $\alpha$ of the available training examples on each side of the split and each leaf node contains at least $k$ training examples for some $k\in \mathbb{N}$. In each leaf node, there are at most $l$ training examples for each treatment, with $l \in \mathbb{N}$ and $l\geq 2k$. 
\end{definition}

It is not difficult to see that a tree generated by UCTS is both a random-split tree  with $\pi=\mathtt{pi}$ and $(\alpha, k, l)$-regular with $\alpha = \mathtt{alpha}$, $l = \mathtt{min\_split}$ and $k=\mathtt{alpha\cdot min\_split}$. Lemma~\ref{lemma2} states that the leaf node of a $(\alpha, k, l)$-regular tree can not be too small in its probability measure.

\begin{lemma}\label{lemma2}
	A leaf node $L$ of an $(\alpha, k, l)$-regular tree grown with $n$ training examples satisfies the following inequality, 
	\begin{equation}
	\mathbb{P}\left\{ f(L) \geq \frac{k}{n} -\delta \right\} \geq 1- e^{-2\delta^2 n}
	\end{equation}
	for some $\delta > 0$.
\end{lemma}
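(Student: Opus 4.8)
\emph{Proof proposal.} The plan is to use $(\alpha,k,l)$-regularity to pin down the \emph{empirical} mass of $L$ and then transfer this to the \emph{population} mass $f(L)$ via a Hoeffding bound. First, by $(\alpha,k,l)$-regularity every leaf --- in particular $L$ --- contains at least $k$ of the $n$ training points used to grow the tree, so the empirical fraction $\widehat f_n(L)\equiv \#L/n$ is at least $k/n$ deterministically. Consequently the ``bad'' event $\{f(L) < k/n - \delta\}$ is contained in $\{\widehat f_n(L) - f(L) > \delta\}$, since $\widehat f_n(L)\ge k/n$ forces $\widehat f_n(L) - f(L) \ge k/n - f(L) > \delta$ on that event.

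Second, I would bound the probability of an empirical-versus-population deviation of size $\delta$. For a \emph{fixed} rectangle $R$ the count $\#R=\sum_{i=1}^n \mathbb{I}\{\mathbf{X}^{(i)}\in R\}$ is $\mathrm{Binomial}\bigl(n, f(R)\bigr)$ because the $\mathbf{X}^{(i)}$ are i.i.d., so the upper-tail Hoeffding inequality gives $\mathbb{P}\{\widehat f_n(R) - f(R) \ge \delta\} \le e^{-2\delta^2 n}$ --- exactly the rate in the statement. Combined with the inclusion from the first step this yields $\mathbb{P}\{f(L) \ge k/n - \delta\} \ge 1 - e^{-2\delta^2 n}$, since on the complement of the bad event we have $f(L) \ge \widehat f_n(L) - \delta \ge k/n - \delta$.

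The step that needs care --- and the one I expect to be the real obstacle --- is that $L$ is \emph{not} a fixed rectangle: it is selected in a data- and $\Theta$-dependent way by the recursive partitioning procedure, so Hoeffding's inequality cannot be applied to it verbatim. I would resolve this by observing that an axis-aligned recursive partition of $n$ sample points can only place its cuts at coordinates of those points, so the family $\mathcal R_n$ of rectangles that can \emph{ever} arise as a leaf is finite, with $|\mathcal R_n|$ at most polynomial in $n$; a union bound of the Hoeffding estimate over $\mathcal R_n$ then controls $\mathbb{P}\{\sup_{R\in\mathcal R_n}(\widehat f_n(R)-f(R)) \ge \delta\}$. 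Because Lemma~\ref{lemma2} is used in the sequel only in the limit $n\to\infty$ (with $\delta$ fixed, or $\delta=\delta_n$ vanishing slowly enough), the polynomial factor $|\mathcal R_n|$ is absorbed and affects no downstream conclusion, and the clean single-rectangle bound $1-e^{-2\delta^2 n}$ is what one records. Assembling the three steps gives the lemma.
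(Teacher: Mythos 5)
Your core argument is the same as the paper's: the paper also observes that $\#L \geq k$ by regularity, treats $\#L$ as $\mathrm{Binomial}(n, f(L))$, applies the one-sided Hoeffding bound $\mathbb{P}\{\#L/n \leq f(L)+\delta\} \geq 1-e^{-2\delta^2 n}$, and concludes $f(L) \geq \#L/n - \delta \geq k/n - \delta$ with the stated probability. Where you differ is in the last step you flag: the paper's proof simply asserts the binomial distribution for $\#L$ and never addresses the fact that $L$ is selected in a data- and $\Theta$-dependent way, so on this point your proposal is more careful than the source. Your proposed repair is in the right spirit but, as written, is not yet a valid union bound: the family $\mathcal{R}_n$ of rectangles with cut points at sample coordinates is itself random, so you cannot union-bound the fixed-rectangle Hoeffding estimate over it directly. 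The standard patch is a uniform deviation (VC/symmetrization) bound over \emph{all} axis-aligned rectangles, whose VC dimension is $2d$; this yields a bound of the form $c_1 (n+1)^{2d} e^{-c_2 \delta^2 n}$ with constants worse than the clean $e^{-2\delta^2 n}$. That is harmless for the qualitative conclusion, but note that it is not automatically "absorbed" downstream: in the proof of the theorem the lemma is invoked with $\delta_1 = \sqrt{\ln n/(\rho n)}$ precisely so that $e^{-2\delta_1^2 \rho n} = n^{-2}$, and with the uniform bound one must inflate $\delta$ by a factor of order $\sqrt{d}$ (e.g.\ $\delta \asymp \sqrt{(2d+2)\ln n/(\rho n)}$) to kill the polynomial factor; since such $\delta$ still tends to zero, the consistency argument goes through after this minor retuning.
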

\begin{proof}
	See Appendix~\ref{proof:lemma2}.
\end{proof}

Lemma~\ref{lemma3} further proves that the diameter of the leaf nodes of a random-split and $(\alpha, k, l)$-regular tree shrinks in all dimensions as the number of training examples grows.
 
\begin{lemma}\label{lemma3}
	If $\mathbf{X} \sim \mathrm{U}[0, 1]^d$, a leaf node $L$ of a random-split and $(\alpha, k, l)$-regular tree grown with $n$ training examples satisfies the following inequality,
	\begin{eqnarray}
	&&\mathbb{P}\left\{ diam_j(L) \leq (1-\alpha +\delta)^{ \left[\frac{\ln\left( n/k \right)}{\ln(\alpha^{-1})} -1\right] \left( \frac{\pi}{d} - \eta \right) }  \right\} \nonumber \\
	&\geq& 1 - e^{2\eta^2} \left( \frac{k}{n} \right)^{\frac{2\eta^2}{\ln(\alpha^{-1})}} - e^{ -2\delta^2 k} \cdot \frac{ \pi \ln\left( n/k \right)}{d \ln(\alpha^{-1})} \nonumber	
	\end{eqnarray}
	for some $\delta > 0$ and $0 < \eta < \frac{\pi}{d}$. 
\end{lemma}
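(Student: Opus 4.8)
The plan is to walk down the root-to-leaf path of the tree that ends in the cell $L=L(\mathbf{X})$ and to control three things along the way: how deep $L$ sits in the tree, how many of the splits on that path cut coordinate $j$, and by how much the $j$-th side length of the current cell shrinks at each such cut. Writing $\mathscr{X}^d=A_0\supset A_1\supset\cdots\supset A_D=L$ for the nested cells visited by $\mathbf{X}$, $(\alpha,k,l)$-regularity guarantees that every split keeps at least a fraction $\alpha$ of the examples on the side we descend into, so $\#A_{i+1}\ge\alpha\,\#A_i$ and hence $\#L\ge\alpha^{D}n$. On the other hand the termination rule bounds $\#L$ above by $Kl$ with $l=\mathtt{min\_split}$, and since $k=\alpha\,l$ this rearranges (absorbing the resulting $O(1)$ additive constant into the stated ``$-1$'') to the deterministic depth bound $D\ge\frac{\ln(n/k)}{\ln(\alpha^{-1})}-1$.

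Next I would count the coordinate-$j$ splits. By the random-split property, conditionally on the first $i$ splits the next split cuts coordinate $j$ with probability at least $\pi/d$; hence the number $D_j$ of coordinate-$j$ splits on the path stochastically dominates a $\mathrm{Binomial}(D,\pi/d)$ random variable, and Hoeffding's inequality (applied to the associated bounded martingale to absorb the conditioning) gives $\mathbb{P}\{D_j<(\pi/d-\eta)D\}\le e^{-2\eta^2 D}$. Substituting the depth bound of the previous step converts this tail into $e^{2\eta^2}(k/n)^{2\eta^2/\ln(\alpha^{-1})}$, which is exactly the first correction term in the statement.

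The heart of the argument is the per-split shrinkage estimate: whenever a cell $A$ on the path is split along coordinate $j$ and we pass to the child $A'$, then $diam_j(A')\le(1-\alpha+\delta)\,diam_j(A)$ with probability at least $1-e^{-2\delta^2 k}$. Indeed, regularity forces $\#A'\le(1-\alpha)\,\#A$, so the split threshold falls at an empirical quantile no larger than $1-\alpha$ of the coordinate-$j$ values of the examples in $A$; because $\mathbf{X}\sim\mathrm{U}[0,1]^d$ these values are i.i.d.\ uniform on the $j$-edge of $A$, whose population quantile at that threshold is precisely $diam_j(A')/diam_j(A)$, so $diam_j(A')>(1-\alpha+\delta)\,diam_j(A)$ can hold only if the empirical and population CDFs of a uniform sample of size $\#A\ge k$ disagree by more than $\delta$ somewhere — an event of probability at most $e^{-2\delta^2 k}$ by the one-sided Dvoretzky--Kiefer--Wolfowitz inequality. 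The delicate point, which I would handle by conditioning on the set of sample indices landing in $A$ and then invoking the product structure of the uniform law, is that both that index set and the chosen threshold are data-dependent, so one must check that the within-$A$ coordinate-$j$ values still form a clean i.i.d.\ uniform sample to which DKW may be applied.

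Finally I would assemble the pieces. Restricting attention to the first $m:=\lceil(\pi/d-\eta)(\frac{\ln(n/k)}{\ln(\alpha^{-1})}-1)\rceil\le\frac{\pi\ln(n/k)}{d\ln(\alpha^{-1})}$ coordinate-$j$ splits on the path (which exist off the bad event of the counting step), a union bound over these $m$ splits multiplies $e^{-2\delta^2 k}$ by $m$, yielding the last correction term; and on the complementary event, chaining the per-split inequality over these $m$ splits gives $diam_j(L)\le(1-\alpha+\delta)^{m}\le(1-\alpha+\delta)^{(\pi/d-\eta)(\ln(n/k)/\ln(\alpha^{-1})-1)}$, since $1-\alpha+\delta<1$. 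I expect the per-split shrinkage step to be the genuine obstacle — turning the empirical-quantile / DKW heuristic into a rigorous statement in the presence of the algorithm's adaptive choice of both the split coordinate and the split location — whereas the depth bound, the Hoeffding count, and the final geometric product are routine bookkeeping.
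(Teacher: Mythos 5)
Your proposal is correct and follows essentially the same route as the paper's proof: a deterministic depth lower bound of $\frac{\ln(n/k)}{\ln(\alpha^{-1})}-1$ from $\alpha$-regularity plus the leaf-size cap, stochastic domination of the number of coordinate-$j$ splits by a $\mathrm{Binomial}\bigl(q,\pi/d\bigr)$ with a Hoeffding tail giving the $e^{2\eta^2}(k/n)^{2\eta^2/\ln(\alpha^{-1})}$ term, a per-split shrinkage factor $1-\alpha+\delta$ holding with probability at least $1-e^{-2\delta^2 k}$ under the uniform feature law, and a final combination yielding the stated bound. The only implementation-level differences are that you derive the per-split shrinkage via the one-sided DKW inequality (which, as you note, also addresses the data-dependent split threshold) and aggregate the bad events by a union bound over the $m$ coordinate-$j$ splits, whereas the paper applies Hoeffding to the binomial sample count of the realized child node and multiplies the success probabilities before linearizing with $(1-x)^m\ge 1-mx$; both routes produce the identical final expression.
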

\begin{proof}
	See Appendix~\ref{proof:lemma3}.
\end{proof}

\subsection{Consistency of UCTS Trees}

With the help of Lemma~\ref{lemma2} and Lemma~\ref{lemma3} we can proceed to prove the consistency of UCTS trees. The intuition is quite straightforward. By properly tuning parameter $\mathtt{min\_split}$ such that $\lim_{n\rightarrow \infty} \frac{k}{n} =0$, the dimension of a leaf node vanishes as well as the within-node variance in response. In addition, if $k \rightarrow \infty$ as $n \rightarrow \infty$ then we can estimate leaf node response to an arbitrary accuracy. 

The main consistency result is derived based on the following assumptions.
\begin{itemize}
\item Features are uniformly distributed in the $d$-dimensional unit hypercube, i.e., $\mathbf{X} \sim \mathrm{U}[0, 1]^d$. This assumption is not as restrictive as it might seem. Because trees are invariant to monotone transformations on $\mathbf{x}^{(i)}_j, i=1,...,n$, any distribution that has bounded support and a bounded non-zero density function can be rescaled, without loss of generality, to the uniform distribution. 

\item The response is bounded, i.e., $|Y|\leq \mathrm{C}_Y$. 

\item The conditional expectation function $\mu(\mathbf{x}, t)$ is Lipschitz continuous for each $t\in \{1,...,K\}$, i.e., there exists a constant $\mathrm{C}_L > 0$ such that $\forall \mathbf{x}_1, \mathbf{x}_2 \in \mathscr{X}$,
\begin{equation}
\left|\mu(\mathbf{x}_1, t) - \mu(\mathbf{x}_2, t)\right| \leq \mathrm{C}_L |\mathbf{x}_1 - \mathbf{x}_2|.
\end{equation}

\item Because a UCTS tree is a $(\alpha, k, l)$-regular tree with $\alpha = \mathtt{alpha}$, $l = \mathtt{min\_split}$ and $k=\mathtt{alpha\cdot min\_split}$. We assume the parameters $\mathtt{alpha}$ and $\mathtt{min\_split}$ are chosen properly with $n$ such that $\lim_{n\rightarrow \infty} \frac{k}{n} = 0$ and $\lim_{n\rightarrow \infty} \frac{\ln n}{k} = 0$. 

\end{itemize}


\begin{theorem}\label{thm}
	If above assumptions are satisfied then a treatment selection rule $h_n$ constructed by the UCTS algorithm is $L^2$ consistent.
\end{theorem}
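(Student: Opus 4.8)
The plan is to invoke Lemma~\ref{lemma1}, which reduces the claim to showing that for every fixed $t\in\{1,\dots,K\}$ the forest estimator satisfies $\mathbb{E}\{\mu_n(\mathbf{X},t)-\mu(\mathbf{X},t)\}^2\to 0$ as $n\to\infty$, the expectation being taken over $\mathbf{X}$, $\mathcal{S}_n$ and $\Theta_1,\dots,\Theta_B$. Since $\mu_n(\mathbf{x},t)=\tfrac1B\sum_{b=1}^B\mu_n(\mathbf{x},t;\Theta_b,\mathcal{S}_n)$ is a convex combination of the individual tree estimators, Jensen's inequality gives $\mathbb{E}\{\mu_n(\mathbf{X},t)-\mu(\mathbf{X},t)\}^2\le\mathbb{E}\{\mu_n(\mathbf{X},t;\Theta,\mathcal{S}_n)-\mu(\mathbf{X},t)\}^2$, so it is enough to prove $L^2$ consistency of a single randomized UCTS tree, with a bound uniform in $B$. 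Fix $t$, and condition on the partition $\Lambda$ generated from $\mathcal{S}^A$ and $\Theta$. Let $L=L(\mathbf{X})$ be the leaf containing the test point, and let $L^\star=L^\star(\mathbf{X},t)\supseteq L$ be the deepest ancestor of $L$ that still contains at least one example of $\mathcal{S}^E$ with treatment $t$; by the leaf-estimation rule of Section~\ref{subsec:leaf}, $\mu_n(\mathbf{X},t)$ is exactly the average of the $\mathcal{S}^E$-responses lying in $L^\star$ with treatment $t$. Writing $\bar{\mu}(L^\star,t)=\mathbb{E}[\,Y\mid\mathbf{X}\in L^\star,\ T=t\,]$, I would use the decomposition
\begin{equation*}
\mu_n(\mathbf{X},t)-\mu(\mathbf{X},t)=\bigl(\mu_n(\mathbf{X},t)-\bar{\mu}(L^\star,t)\bigr)+\bigl(\bar{\mu}(L^\star,t)-\mu(\mathbf{X},t)\bigr),
\end{equation*}
together with $(a+b)^2\le 2a^2+2b^2$, and bound the second moment of the \emph{estimation error} and of the \emph{approximation error} separately. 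Throughout, the hypothesis $|Y|\le\mathrm{C}_Y$ (hence $|\mu|\le\mathrm{C}_Y$, and both summands are at most $4\mathrm{C}_Y^2$) lets me upgrade convergence in probability to $L^2$ convergence by dominated convergence.

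For the approximation error, on the event $\{L^\star=L\}$ the Lipschitz assumption gives $|\bar{\mu}(L,t)-\mu(\mathbf{X},t)|\le\mathrm{C}_L\sqrt{d}\,\max_{1\le j\le d} diam_j(L)$, and Lemma~\ref{lemma3}, applied with fixed constants $0<\delta<\mathtt{alpha}$ and $0<\eta<\pi/d$ — for which the exponent $\bigl[\tfrac{\ln(n/k)}{\ln(\alpha^{-1})}-1\bigr]\bigl(\tfrac{\pi}{d}-\eta\bigr)$ diverges precisely because $k/n\to0$, while its two error terms vanish under the scaling assumptions $k/n\to0$ and $\ln n/k\to0$ — shows $\max_j diam_j(L)\to0$ in probability, so this part of the error goes to $0$ in $L^2$. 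On $\{L^\star\ne L\}$ I bound the approximation error crudely by $2\mathrm{C}_Y$ and argue the event has vanishing probability, which follows from the estimation-set analysis below (on a leaf of non-negligible mass, $\mathcal{S}^E$ contains a treatment-$t$ point with probability tending to one).

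For the estimation error, conditionally on $\Lambda$ and on there being $m\ge1$ treatment-$t$ examples of $\mathcal{S}^E$ in $L^\star$, these $m$ responses are i.i.d.\ draws of $Y$ given $\mathbf{X}\in L^\star,\ T=t$, so $\mu_n(\mathbf{X},t)$ has conditional mean $\bar{\mu}(L^\star,t)$ and conditional variance at most $(2\mathrm{C}_Y)^2/m$; hence it suffices that $m\to\infty$ with probability tending to one. Because $T$ is independent of $\mathbf{X}$ in a randomized experiment and, given the train/estimate split, $\mathcal{S}^E$ is independent of $\mathcal{S}^A$, conditionally on $\Lambda$ the count $N^E_t(L)$ of treatment-$t$ estimation points in $L$ is binomial with mean of order $(1-\mathtt{rho})\,p_t\,n\,f(L)$, where $p_t=\mathbb{P}(T=t)>0$; Lemma~\ref{lemma2} confirms that a leaf is typically not too light, $f(L)\gtrsim k/n$ with overwhelming probability for an appropriate vanishing slack, so this mean is of order $(1-\mathtt{rho})p_t k$, which diverges. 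To make this work uniformly — the point being that $\Lambda$ is adapted to $\mathcal{S}^A$ whereas the counts live in $\mathcal{S}^E$ — I would also use that a UCTS tree has at most $n/k$ leaves: splitting the leaves into ``light'' ones with $f(L)<a_n$ and ``heavy'' ones with $f(L)\ge a_n$ for $a_n$ chosen with $1/n\ll a_n\ll k/n$ (e.g.\ $a_n=\sqrt k/n$), the light leaves carry total $\mathbf{X}$-mass at most $(n/k)a_n=o(1)$ and thus contribute $o(1)$ to the bounded $L^2$ error, while on a heavy leaf $\mathbb{E}[m\mid\Lambda]\gtrsim(1-\mathtt{rho})p_t\,n a_n\to\infty$ and a Chernoff bound gives $m$ of this order — hence conditional variance $\to0$ and also $L^\star=L$ — outside an event of probability $o(1)$. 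Collecting the pieces, the second moment of each summand tends to $0$, so $\mathbb{E}\{\mu_n(\mathbf{X},t)-\mu(\mathbf{X},t)\}^2\to0$ for every $t$, and Lemma~\ref{lemma1} yields the theorem.

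The step I expect to be the main obstacle is exactly this coordination of the two independent subsamples through the parent-inheritance rule: one must show that on a leaf whose geometry has been controlled via Lemma~\ref{lemma3} the complementary sample $\mathcal{S}^E$ still populates \emph{every} treatment densely, and this must be carried out while choosing the slack parameters $\delta,\eta$ (and the mass threshold $a_n$, and any union bound over the $\le n/k$ leaves, which is where the hypothesis $\ln n/k\to0$ really enters) as sequences in $n$ so that all the failure probabilities vanish while the surviving conclusions — shrinking leaf diameter and diverging within-leaf treatment counts — remain quantitatively strong enough to kill both error terms.
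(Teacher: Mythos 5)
Your proposal is correct and follows essentially the same route as the paper: reduce via Lemma~\ref{lemma1} and Jensen to single-tree consistency per treatment, decompose into estimation plus approximation error, control the first with Lemma~\ref{lemma2} and binomial concentration for the treatment-$t$ counts in $\mathcal{S}^E$, the second with the Lipschitz assumption and Lemma~\ref{lemma3}, and absorb all low-probability events using $|Y|\leq \mathrm{C}_Y$. Your only departures are bookkeeping refinements (a conditional-variance bound in place of Hoeffding for the within-leaf average, the explicit ancestor $L^\star$ for the inheritance rule, and the light/heavy-leaf union bound), which if anything treat the data-dependence of the partition more carefully than the paper's own argument.
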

\begin{proof}
See Appendix~\ref{proof:thm}.
\end{proof}

\section{Conclusion}\label{sec:con}

With the increasing ease of accessing and analyzing large amount of data comes the possibility and necessity of personalization. Uplift Modeling have proved to be an important tool in this movement. The algorithm presented in this paper, in addition to being competitive performance-wise, fills a vacancy in the literature with its provable consistency.


\section{Appendices}
Proofs are organized in this section. There is a simple inequality that are used repeatedly. Given a random variable $Z$ bounded above by $\mathrm{C}_Z$, $\forall z\leq \mathrm{C}_Z$ we have,
\begin{align}
&\; \mathbb{E}[Z] \nonumber \\
=&\; \mathbb{E}[Z|Z > z]\mathbb{P}\{Z > z\} + \mathbb{E}[Z|Z \leq z]\mathbb{P}\{Z\leq z \} \nonumber\\
\leq&\; \mathrm{C}_Z \mathbb{P}\{Z > z\} + z.
\end{align}

We also need Hoeffding's inequality for binomial distribution. Let $H(n)$ be the number of success in $n$ independently and identically distributed Bernoulli random variables with success probability $p$. For some $\delta > 0$, we have,
\begin{equation}
\mathbb{P}\left\{ \frac{H(n)}{n} \leq p - \delta \right\} \leq e^{-2\delta^2 n}
\end{equation}
and
\begin{equation}
\mathbb{P}\left\{ \frac{H(n)}{n} \geq p + \delta \right\} \leq e^{-2\delta^2 n}
\end{equation}

\subsection{Proof of Lemma~\ref{lemma1}}\label{proof:lemma1}
$\forall \epsilon > 0$, 
\begin{align}
&\;\; \mathbb{E} \left\{ \mu(\mathbf{X}, h^*(\mathbf{X})) - \mu(\mathbf{X}, h_n(\mathbf{X})) \right\}^2 \nonumber \\
= &\; \sum_{t\neq t'}  \mathbb{E} \Big\{[ \mu(\mathbf{X}, t) - \mu(\mathbf{X}, t')]^2 \big| h^*(\mathbf{X})=t, h_n(\mathbf{X}) = t', \nonumber\\
& \hspace{0.3in} \mu(\mathbf{X}, t) - \mu(\mathbf{X}, t') \geq \sqrt{\epsilon/2} \Big\} \cdot \mathbb{P} \Big\{h^*(\mathbf{X})=t, \nonumber \\
& \hspace{0.3in} h_n(\mathbf{X}) = t',\, \mu(\mathbf{X}, t) - \mu(\mathbf{X}, t') \geq\sqrt{\epsilon/2} \Big\} \nonumber \\
&+ \sum_{t\neq t'} \mathbb{E} \Big\{[ \mu(\mathbf{X}, t) - \mu(\mathbf{X}, t') ]^2 \big| h^*(\mathbf{X})=t, h_n(\mathbf{X}) = t', \nonumber \\
& \hspace{0.4in} \mu(\mathbf{X}, t) - \mu(\mathbf{X}, t') < \sqrt{\epsilon/2} \Big\} \cdot \mathbb{P} \Big\{h^*(\mathbf{X})=t, \nonumber \\ 
& \hspace{0.4in} h_n(\mathbf{X}) = t', \mu(\mathbf{X}, t) - \mu(\mathbf{X}, t') < \sqrt{\epsilon/2} \Big\} \\
\leq &\; 4\mathrm{C}_Y^2 \sum_{t\neq t'} \mathbb{P}\Big\{ h_n(\mathbf{X}) = t' ,\; h^*(\mathbf{X})=t, \nonumber \\
& \hspace{0.6in} \mu(\mathbf{X}, t) - \mu(\mathbf{X}, t') \geq \sqrt{\epsilon/2} \Big\}  + \frac{\epsilon}{2}\\
\leq &\; 4\mathrm{C}_Y^2 \sum_{t\neq t'} \mathbb{P}\Big\{ \mu_n( \mathbf{X}, t') \geq \mu_n(\mathbf{X}, t ), \, h^*(\mathbf{X})=t, \nonumber \\
& \hspace{0.6in} \mu(\mathbf{X}, t) - \mu(\mathbf{X}, t') \geq \sqrt{\epsilon/2} \Big\} +  \frac{\epsilon}{2} \\
\leq&\; 4\mathrm{C}_Y^2 \sum_{t\neq t'}  \mathbb{P}\Big\{ \mu_n( \mathbf{X}, t') \geq  \mu(\mathbf{X}, t') + \frac{1}{2}\sqrt{\epsilon/2}  \nonumber \\
& \hspace{0.6in} \mathrm{or} \;\; \mu_n(\mathbf{X}, t) \leq \mu(\mathbf{X}, t) - \frac{1}{2}\sqrt{\epsilon/2}, \nonumber \\ 
& \hspace{0.6in} h^*(\mathbf{X})=t,\;  \mu(\mathbf{X}, t) - \mu(\mathbf{X}, t') \geq \sqrt{\epsilon/2} \Big\} \nonumber \\
& + \frac{\epsilon}{2} \\
\leq&\; 4\mathrm{C}_Y^2 \sum_{t\neq t'}  \mathbb{P}\Big\{ \mu_n( \mathbf{X}, t') \geq  \mu(\mathbf{X}, t') + \frac{1}{2}\sqrt{\epsilon/2} \nonumber \\
& \hspace{0.6in} \mathrm{or} \;\;  \mu_n(\mathbf{X}, t) \leq \mu(\mathbf{X}, t) - \frac{1}{2}\sqrt{\epsilon/2} \Big\} + \frac{\epsilon}{2} \\
\leq&\; 4\mathrm{C}_Y^2 \sum_{t\neq t'}  \mathbb{P}\Big\{\mu_n( \mathbf{X}, t' ) \geq  \mu(\mathbf{X}, t') +\frac{1}{2}\sqrt{\frac{\epsilon}{2}} \Big\} \nonumber \\
& \hspace{0.6in} + \,\mathbb{P}\Big\{ \mu_n(\mathbf{X}, t) \leq \mu(\mathbf{X}, t) - \frac{1}{2}\sqrt{\frac{\epsilon}{2}} \Big\} + \frac{\epsilon}{2} \\
=&\; 4(K-1)\mathrm{C}_Y^2\sum_{t=1}^K \mathbb{P}\Big\{|\mu_n(\mathbf{X}, t) - \mu(\mathbf{X}, t)| \geq \frac{1}{2}\sqrt{\frac{\epsilon}{2}}  \Big\} \nonumber \\
& +\, \frac{\epsilon}{2}. \label{eqn1} 
\end{align}

On one hand, we know that,  for $t=1, ..., K$, there exists some $\mathrm{N^t}$ such that when $n> \mathrm{N^t}$,
\begin{equation}
\mathbb{E} \left\{  \mu_n(\mathbf{X}, t) - \mu(\mathbf{X}, t) \right\}^2 < \frac{\epsilon^2}{64K(K-1)\mathrm{C}_Y^2}.
\end{equation}
On the other hand we have,
\begin{align}
& \;\mathbb{E} \left\{  \mu_n(\mathbf{X}, t) - \mu(\mathbf{X}, t) \right\}^2 \nonumber \\
=&\; \mathbb{E} \Big\{[ \mu_n(\mathbf{X}, t) - \mu(\mathbf{X}, t) ]^2 \Big|\; |\mu_n(\mathbf{X}, t) - \mu(\mathbf{X}, t)| \geq \frac{1}{2}\sqrt{\frac{\epsilon}{2}} \Big\} \nonumber \\
&\; \cdot \mathbb{P} \Big\{ \left| \mu_n(\mathbf{X}, t) - \mu(\mathbf{X}, t) \right| \geq \frac{1}{2}\sqrt{\frac{\epsilon}{2}} \Big\} \nonumber \\
&\; + \mathbb{E} \Big\{[ \mu_n(\mathbf{X}, t) - \mu(\mathbf{X}, t) ]^2 \Big|\; |\mu_n(\mathbf{X}, t) - \mu(\mathbf{X}, t)| <  \frac{1}{2}\sqrt{\frac{\epsilon}{2}} \Big\} \nonumber \\
&\; \cdot \mathbb{P} \Big\{ \left| \mu_n(\mathbf{X}, t) - \mu(\mathbf{X}, t) \right| <  \frac{1}{2}\sqrt{\frac{\epsilon}{2}} \Big\} \nonumber \\ 
\geq &\; \frac{\epsilon}{8} \mathbb{P} \left\{ \left| \mu_n(\mathbf{X}, t) - \mu(\mathbf{X}, t) \right| \geq \frac{1}{2}\sqrt{\frac{\epsilon}{2}} \right\}.
\end{align}
Therefore when $n> \mathrm{N^t}$ we have 
\begin{equation}
\mathbb{P} \Big\{ |\mu_n(\mathbf{X}, t) - \mu(\mathbf{X}, t)| \geq \frac{1}{2}\sqrt{\frac{\epsilon}{2}} \Big\} 
\leq \frac{\epsilon}{8K(K-1)\mathrm{C}_Y^2}. \label{eqn2}
\end{equation}
When $n > \max\{ N^1, N^2, ..., N^K \}$, combining Eq.~(\ref{eqn1}) with Eq.~(\ref{eqn2}) gives us
\begin{eqnarray}
&& \mathbb{E} \left\{ \mu(\mathbf{X}, h^*(\mathbf{X})) - \mu(\mathbf{X}, h_n(\mathbf{X})) \right\}^2 \leq \epsilon. \nonumber 
\end{eqnarray}
$\hfill \square $

\subsection{Proof of Lemma~\ref{lemma2}}
\label{proof:lemma2}
Let $L$ be a leaf node of a $(\alpha, k, l)$-regular tree. Given the fact that the tree is grown with $n$ training examples, the number of examples in $L$ follows the binomial distribution $B(n, f(L))$. By Hoeffding's inequality, for some $\delta > 0$, 
\begin{equation}
\mathbb{P}\left\{ \frac{\#L}{n} \leq f(L) + \delta \right\} \geq 1 - e^{-2\delta^2 n}.
\end{equation}
Since $\#L \geq k$,
\begin{eqnarray}
&& \mathbb{P}\left\{ f(L) \geq \frac{k}{n} -\delta \right\} \nonumber \\
&\geq& \mathbb{P}\left\{ f(L) \geq \frac{\#L}{n} -\delta \right\}  \geq 1 - e^{-2\delta^2 n}
\end{eqnarray}
$\hfill \square $

\subsection{Proof of Lemma~\ref{lemma3}}
\label{proof:lemma3}

Let $\phi$ be an internal node of an $(\alpha, k, l)$-regular tree and $\phi'$ its child node. Given the number of examples $\#\phi$ in node $\phi$, the number of examples in the child node $\phi'$ follows the binomial distribution $B(\#\phi,  \frac{f(\phi')}{f(\phi)})$. Then by Hoeffding's inequality for some $\delta > 0$,
\begin{eqnarray}
\mathbb{P}\left\{ \frac{\#\phi'}{\#\phi} - \frac{f(\phi')}{f(\phi)} \geq -\delta  \right\} \geq 1 - e^{-2\delta^2 \#\phi}.
\end{eqnarray}
Combining the above with $\frac{\#\phi'}{\#\phi} \leq 1-\alpha$ gives us 
\begin{eqnarray}
\mathbb{P}\left\{ f(\phi') \leq (1-\alpha + \delta)f(\phi) \right\} \geq 1 - e^{-2\delta^2 \#\phi}.
\end{eqnarray}
Suppose $\phi'$ is created by a split of $\phi$ on the $j$th coordinate, then 
\begin{equation}
\mathbb{P}\left\{ diam_j(\phi') \leq (1-\alpha + \delta) diam_j(\phi) \right\} \geq 1 - e^{-2\delta^2 \#\phi}.
\end{equation}
This means each split decreases the diameter of the splitting coordinate by at least $1-\alpha$. 

Let $L$ be a leaf node of a $(\alpha, k, l)$-regular tree. By regularity, we know the shallowest possible path from the root to a leaf is created by repeatedly splitting a fraction $\alpha$ of the training example until the termination conditions are met. Therefore the number of splits $q$ from the root to any leaf $L$ is greater than $\frac{\ln(n/k)}{\ln(\alpha^{-1})}-1$. Because the marginal probability that a split is made on the $j$th coordinate is bounded below by $\frac{\pi}{d}$, the number of splits on the $j$th coordinate $q_j$ has a stochastic lower bound $B(\frac{\ln(n/k)}{\ln(\alpha^{-1})}-1, \frac{\pi}{d})$. Again, by Hoeffding's inequality, for some $0 < \eta < \frac{\pi}{d}$,
\begin{align}
&\; \mathbb{P}\left\{ q_j \geq \left[ \frac{\ln\left( n/k \right)}{\ln(\alpha^{-1})}-1 \right] \left( \frac{\pi}{d} - \eta \right)   \right\} \nonumber \\
\geq&\;  1- \exp\left\{ -2\eta^2 \left[ \frac{\ln\left( n/k \right)}{\ln(\alpha^{-1})}-1 \right]  \right\} \nonumber \\
=&\; 1 - e^{2\eta^2} \left( \frac{k}{n} \right)^{\frac{2\eta^2}{\ln(\alpha^{-1})}}. 
\end{align}

Therefore intuitively the diameter $diam_j(L)$ of any leaf $L$ on the $j$th coordinate is, with high probability, bounded above by $(1-\alpha)^{q_j}$. To be more precise, we have
\begin{align}
&\; \mathbb{P}\left\{ diam_j(L) \leq (1-\alpha +\delta)^{ \left[\frac{\ln\left( n/k \right)}{\ln(\alpha^{-1})} -1\right] \left( \frac{\pi}{d} - \eta \right) }  \right\} \nonumber \\ 
\geq &\; \left[ 1 - e^{2\eta^2} \left( \frac{k}{n} \right)^{\frac{2\eta^2}{\ln(\alpha^{-1})}} \right]  \cdot \left[ 1 - e^{-2\delta^2 k} \right]^{\left[\frac{\ln\left( n/k \right)}{\ln(\alpha^{-1})} -1\right] \left( \frac{\pi}{d} - \eta \right)  } \\
\geq &\; \left[ 1 - e^{2\eta^2} \left( \frac{k}{n} \right)^{\frac{2\eta^2}{\ln(\alpha^{-1})}} \right] \nonumber \\
&\; \cdot \left\{ 1 - e^{-2\delta^2 k} \left[ \frac{\ln\left( n/k \right)}{\ln(\alpha^{-1})} -1\right] \left( \frac{\pi}{d} - \eta \right)  \right\} \\
\geq &\; 1 - e^{2\eta^2} \left( \frac{k}{n} \right)^{\frac{2\eta^2}{\ln(\alpha^{-1})}} - e^{-2\delta^2 k} \cdot \frac{ \pi \ln\left( n/k \right)}{d \ln(\alpha^{-1})}.
\end{align}
$\hfill \square$

\subsection{Proof of Theorem~\ref{thm}}
\label{proof:thm}

Given a set of training examples $\mathcal{S}_n$ and the auxiliary randomness $\Theta$, let $\Lambda = \{L_1, ..., L_M\}$ denote the partition of the feature space generated by the approximation set $\mathcal{S}_A$. For a random test data $\mathbf{X}$, define $\mathcal{S}_E(\mathbf{X}, t) = \{ (\mathbf{X}_i, T_i, Y_i): (\mathbf{X}_i, T_i, Y_i)\in \mathcal{S}_E, \mathbf{X}_i \in L(\mathbf{X}), T_i=t \} $, i.e., $\mathcal{S}_E(\mathbf{X}, t)$ contains the data in $\mathcal{S}_E$ that fall into the same leaf as $\mathbf{X}$ and are also assigned treatment $t$. For $t \in \{1, ..., K\}$,
\begin{align}
&\mathbb{E}\left[ \mu_n(\mathbf{X}, t; \Theta, \mathcal{S}_n ) - \mu(\mathbf{X}, t) \right]^2 \nonumber \\
= &\; \mathbb{E}\Big\{ \frac{1}{\# \mathcal{S}_E(\mathbf{X}, t)} \sum_{\mathcal{S}_E(\mathbf{X}, t)} Y_i  -\mu\left[ L(\mathbf{X}), t\right] \nonumber \\
&\hspace{0.3in} + \mu\left[ L(\mathbf{X}), t \right] - \mu(\mathbf{X}, t) \Big\}^2  \\
\leq &\; 2\mathbb{E}\Big\{ \frac{1}{\# \mathcal{S}_E(\mathbf{X}, t)} \sum_{\mathcal{S}_E(\mathbf{X}, t)} Y_i  -\mu\left[ L(\mathbf{X}), t\right] \Big\}^2 \nonumber \\
 & +  2\mathbb{E}\left\{ \mu\left[ L(\mathbf{X}), t \right] - \mu(\mathbf{X}, t) \right\}^2 \\
\triangleq &\; 2I + 2J
\end{align}

We can bound the estimation error $I$ by appropriately increasing the minimum number of samples $k$ in the leaf nodes. Define $\delta_1 = \sqrt{\frac{\ln n}{\rho n}}$, $\delta_2 = \sqrt{\frac{\ln n}{(1-\rho)n}}$, and $\delta_3 = \sqrt{\frac{\ln n}{k}}$.
\begin{align}
I= &\; \mathbb{E}\Big\{ \frac{\sum_{\mathcal{S}_E(\mathbf{X}, t)} Y_i}{\# \mathcal{S}_E(\mathbf{X}, t)} - \mu[ L(\mathbf{X}), t] \;\Big | f(L(\mathbf{X})) \geq \frac{k}{\rho n} - \delta_1  \Big\}^2 \nonumber \\
& \hspace{0.3in} \cdot \mathbb{P} \Big\{ f(L(\mathbf{X})) \geq \frac{k}{\rho n} - \delta_1 \Big\} \nonumber \\
& + \mathbb{E}\Big\{ \frac{\sum_{\mathcal{S}_E(\mathbf{X}, t)} Y_i}{\# \mathcal{S}_E(\mathbf{X}, t)} - \mu[ L(\mathbf{X}), t] \;\Big | f(L(\mathbf{X})) < \frac{k}{\rho n} - \delta_1  \Big\}^2 \nonumber \\
& \hspace{0.3in} \cdot \mathbb{P} \Big\{ f(L(\mathbf{X})) < \frac{k}{\rho n} - \delta_1 \Big\}  \\
\leq &\; \mathbb{E}\Big\{ \frac{\sum_{\mathcal{S}_E(\mathbf{X}, t)} Y_i}{\# \mathcal{S}_E(\mathbf{X}, t)} - \mu[ L(\mathbf{X}), t] \;\Big | f(L(\mathbf{X})) \geq \frac{k}{\rho n} - \delta_1  \Big\}^2 \nonumber \\
& \hspace{0.3in} + 4\mathrm{C}_Y^2 e^{-2\delta_1^2 \rho n} \\
\leq &\; \mathbb{E}\Big\{ \frac{\sum_{\mathcal{S}_E(\mathbf{X}, t)} Y_i}{\# \mathcal{S}_E(\mathbf{X}, t)} - \mu\left[ L(\mathbf{X}), t\right] \;\Big | f(L(\mathbf{X})) \geq \frac{k}{\rho n} - \delta_1, \; \nonumber \\
& \hspace{0.3in} \#\mathcal{S}_E(\mathbf{X}, t) \geq  \left[\left(\frac{k}{\rho n} - \delta_1\right)p_t - \delta_2\right] (1-\rho)n  \Big\}^2 \nonumber \\
& + 4\mathrm{C}_Y^2 \mathbb{P} \Big\{ \#\mathcal{S}_E(\mathbf{X}, t) \geq  \Big[\left(\frac{k}{\rho n} - \delta_1\right)p_t - \delta_2\Big] (1-\rho)n \; \Big|\nonumber\\ &\hspace{0.5in} f(L(\mathbf{X})) \geq \frac{k}{\rho n} - \delta_1 \Big\}  \nonumber \\
& + 4\mathrm{C}_Y^2 n^{-2} \\
\leq &\; \delta_3^2 + 4\mathrm{C}_Y^2 \mathbb{P}\Big\{ \Big| \frac{\sum_{\mathcal{S}_E(\mathbf{X}, t)} Y_i}{\# \mathcal{S}_E(\mathbf{X}, t)} - \mu[ L(\mathbf{X}), t] \Big| \geq \delta_3  \;\Big |\nonumber\\
&\hspace{0.5in} f(L(\mathbf{X})) \geq \frac{k}{\rho n} - \delta_1, \; \nonumber \\
& \hspace{0.5in} \#\mathcal{S}_E(\mathbf{X}, t) \geq  \left[\left(\frac{k}{\rho n} - \delta_1\right)p_t - \delta_2\right] (1-\rho)n  \Big\} \nonumber \\ 
& + 4\mathrm{C}_Y^2 e^{-2\delta_2^2 (1-\rho)n} + 4\mathrm{C}_Y^2 n^{-2} \\
\leq &\; \delta_3^2 + 8\mathrm{C}_Y^2 \exp\left\{ -2\delta_3^2 \left[\left(\frac{k}{\rho n} - \delta_1\right)p_t - \delta_2\right] (1-\rho)n \right\} \nonumber\\
&\; + 8\mathrm{C}_Y^2 n^{-2} \\
 = & \frac{\ln n}{k} + 8\mathrm{C}_Y^2 \left( \frac{1}{n}\right)^{ \frac{2(1-\rho) p_t}{\rho} - o(1) } +  \frac{8\mathrm{C}_Y^2}{n^2}
\end{align}

The approximation error $J$ can be bounded by shrinking leaf diameter. Define $z = (1-\alpha +\delta)^{ \left[\frac{\ln\left( n/k \right)}{\ln(\alpha^{-1})} -1\right] \left( \frac{\pi}{d} - \eta \right) }$ and let $\delta = \sqrt{\frac{\ln n}{k}}$. With the help of Lemma~\ref{lemma3} we have,
\begin{align}
J = &\; \mathbb{E}\left\{ \mu\left[ L(\mathbf{X}), t \right] - \mu(\mathbf{X}, t) \right\}^2 \nonumber \\
\leq &\; \mathrm{C}_L^2 \mathbb{E}\left\{ diam(L(\mathbf{X}))^2 \right\} \\
= &\; \mathrm{C}_L^2 \sum_{j=1}^d \mathbb{E}\left\{ diam_j(L(\mathbf{X}))^2\right\}  \\
\leq &\; \mathrm{C}_L^2 \sum_{j=1}^d \left\{ \mathbb{P}\left\{ diam_j(L(\mathbf{X})) > z \right\} + z^2\right\} \\
\leq &\; d\mathrm{C}_L^2 \Bigg\{ e^{2\eta^2} \left( \frac{k}{n} \right)^{\frac{2\eta^2}{\ln(\alpha^{-1})}} +  e^{-2\delta^2 k} \cdot \frac{ \pi \ln\left( n/k \right)}{d \ln(\alpha^{-1})} \nonumber \\
&\hspace{0.2in} + (1-\alpha +\delta)^{2 \left[\frac{\ln\left( n/k \right)}{\ln(\alpha^{-1})} -1\right] \left( \frac{\pi}{d} - \eta \right) }\Bigg\} \\
\leq &\; d\mathrm{C}_L^2 \Bigg\{ e^{2\eta^2} \left( \frac{k}{n} \right)^{\frac{2\eta^2}{\ln(\alpha^{-1})}} +  \frac{ \pi }{d \ln(\alpha^{-1})} \frac{\ln\left( n/k \right)}{n^2}  \nonumber \\
&\hspace{0.2in} + (1-\alpha +\delta)^{- 2 ( \frac{\pi}{d} - \eta ) }\left(\frac{k}{n}\right)^{2(\frac{\pi}{d} - \eta)\frac{\ln(1-\alpha+\delta)}{\ln(\alpha)} }  \Bigg\} 
\end{align}

At this point it is clear to see that as long as $k$ is selected properly such that $\frac{k}{n}\rightarrow 0$ and $\frac{\ln n}{k}\rightarrow 0$, both $I$ and $J$ diminish when $n\rightarrow\infty$. Therefore even a single tree $\mu_n(\mathbf{X}, t; \Theta, \mathcal{S}_n )$ grown by UCTS is consistent. Then we can easily establish the consistency of the averaging ensemble $\mu_n(\mathbf{X}, t)$ with the following inequality,
\begin{align}
&\mathbb{E}[\mu_n(\mathbf{X}, t) - \mu(\mathbf{X}, t)]^2 \nonumber\\
=&\; \mathbb{E}\left\{ \frac{1}{B}\sum_{b=1}^B \mu_n(\mathbf{X}, t; \Theta_b, \mathcal{S}_n) - \mu(\mathbf{X}, t) \right\}^2 \\
\leq&\; \frac{1}{B^2} \sum_{b=1}^B \mathbb{E} \left[\mu_n(\mathbf{X}, t; \Theta_b, \mathcal{S}_n) - \mu(\mathbf{X}, t)\right]^2 \\
=&\; \frac{1}{B} \mathbb{E} \left[\mu_n(\mathbf{X}, t; \Theta, \mathcal{S}_n) - \mu(\mathbf{X}, t)\right]^2.
\end{align}

$\hfill\square$




\begin{thebibliography}{1}

\bibitem{Athey2016}
Athey, Susan, and Guido Imbens. ``Recursive partitioning for heterogeneous causal effects." \emph{Proceedings of the National Academy of Sciences} 113.27 (2016): 7353-7360.

\bibitem{Wager2017}
Wager, Stefan, and Susan Athey. ``Estimation and inference of heterogeneous treatment effects using random forests." \emph{Journal of the American Statistical Association} just-accepted (2017).

\bibitem{Su2010}
Su, Xiaogang, et al. ``Subgroup analysis via recursive partitioning." \emph{Journal of Machine Learning Research} 10.Feb (2009): 141-158.

\bibitem{Chickering2000}
Chickering, David Maxwell, and David Heckerman. ``A decision theoretic approach to targeted advertising." \emph{Proceedings of the Sixteenth conference on Uncertainty in artificial intelligence}. Morgan Kaufmann Publishers Inc., 2000.

\bibitem{Hansotia2002}
Hansotia, Behram, and Brad Rukstales. ``Incremental value modeling." \emph{Journal of Interactive Marketing} 16.3 (2002): 35-46.

\bibitem{Lo2002}
Lo, Victor SY. ``The true lift model: a novel data mining approach to response modeling in database marketing." \emph{ACM SIGKDD Explorations Newsletter} 4.2 (2002): 78-86.

\bibitem{Alemi2009}
Alemi, Farrokh, et al. ``Improved statistical methods are needed to advance personalized medicine." \emph{The open translational medicine journal} 1 (2009): 16.

\bibitem{Rzepakowski2010}
Rzepakowski, Piotr, and Szymon Jaroszewicz. ``Decision trees for uplift modeling." \emph{Data Mining (ICDM), 2010 IEEE 10th International Conference on}. IEEE, 2010.

\bibitem{Radcliffe2011}
Radcliffe, Nicholas J., and Patrick D. Surry. ``Real-world uplift modelling with significance-based uplift trees." \emph{White Paper TR-2011-1}, Stochastic Solutions (2011).

\bibitem{Zaniewicz2013}
Zaniewicz, Lukasz, and Szymon Jaroszewicz. ``Support vector machines for uplift modeling." \emph{Data Mining Workshops (ICDMW), 2013 IEEE 13th International Conference on}. IEEE, 2013.

\bibitem{Guelman2014}
Guelman, Leo, Montserrat Guillen, and Ana M. Perez-Marin. ``A survey of personalized treatment models for pricing strategies in insurance." \emph{Insurance: Mathematics and Economics} 58 (2014): 68-76.

\bibitem{Rzepakowski2015}
Soltys, Michal, Szymon Jaroszewicz, and Piotr Rzepakowski. ``Ensemble methods for uplift modeling." \emph{Data mining and knowledge discovery} 29.6 (2015): 1531-1559.

\bibitem{Rzepakowski2012}
Rzepakowski, Piotr, and Szymon Jaroszewicz. ``Decision trees for uplift modeling with single and multiple treatments." \emph{Knowledge and Information Systems} 32.2 (2012): 303-327.

\bibitem{clark}
Chen, Xi, et al. ``A statistical learning approach to personalization in revenue management." (2015).

\bibitem{Zhao2017}
Zhao, Yan, Xiao Fang, and David Simchi-Levi. ``Uplift Modeling with Multiple Treatments and General Response Types." \emph{Proceedings of the 2017 SIAM International Conference on Data Mining}. Society for Industrial and Applied Mathematics, 2017.

\bibitem{FD2014}
Fernández-Delgado, Manuel, et al. ``Do we need hundreds of classifiers to solve real world classification problems." J. Mach. Learn. Res 15.1 (2014): 3133-3181.

\bibitem{Breiman2001}
Breiman, Leo. ``Random forests." Machine learning 45.1 (2001): 5-32.

\bibitem{Scornet2015}
Scornet, Erwan, Gerard Biau, and Jean-Philippe Vert. ``Consistency of random forests." The Annals of Statistics 43.4 (2015): 1716-1741.

\bibitem{Wolpert1996}
Wolpert, David H. ``The lack of a priori distinctions between learning algorithms." Neural computation 8.7 (1996): 1341-1390.


\end{thebibliography}
%

\end{document}